\definecolor{cvprblue}{rgb}{0.21,0.49,0.74}
\title{One Last Attention for Your Vision-Language Model}
\author{%
  Liang Chen{\footnotemark[1] \footnotemark[2]}\\
  MBZUAI\\
  {\tt\small liangchen527@gmail.com}
  \and
  Ghazi Shazan Ahmad\footnotemark[1]\\
  MBZUAI\\
  {\tt\small ghazi.ahmad@mbzuai.ac.ae}
  \and
  Tianjun Yao\\
  MBZUAI\\
  {\tt\small tianjun.yao@mbzuai.ac.ae}
  \and
  Lingqiao Liu\\
  The University of Adelaide\\
  {\tt\small lingqiao.liu@adelaide.edu.au}
  \and
  Zhiqiang Shen\footnotemark[3]\\
  MBZUAI\\
  {\tt\small zhiqiang.shen@mbzuai.ac.ae}
}
\begin{document}
\maketitle
\begingroup
  \renewcommand{\thefootnote}{\fnsymbol{footnote}}%
  \footnotetext{$^{*}$Equal technical contribution. $^{\dagger}$Project lead. $^{\ddagger}$Correspondence.}%
\endgroup
\newcommand{\tableCellHeight}{1}
\newcommand{\tabstyle}[1]{
  \setlength{\tabcolsep}{#1}
  \renewcommand{\arraystretch}{\tableCellHeight}
  \centering
  \small
}

\newcolumntype{L}[1]{>{\raggedright\arraybackslash}p{#1}}
\newcolumntype{C}[1]{>{\centering\arraybackslash}p{#1}}
\newcolumntype{R}[1]{>{\raggedleft\arraybackslash}p{#1}}
\newcommand{\romannum}[1]{\romannumeral #1} 
\newcommand{\rotbox}[1]{\rotatebox{70}{#1}}

\newcommand{\hgreen}[1]{\textcolor{ForestGreen}{\textbf{#1}}} 
\newcommand{\hblue}[1]{\textcolor{NavyBlue}{\textbf{#1}}} 
\newcommand{\cavan}[1]{{\color{blue}(cavan: {#1})}} 
\newcommand{\ky}[1]{{\color{red}(ky: {#1})}} 
\definecolor{tabhighlight}{HTML}{e5e5e5}
\definecolor{citecolor}{HTML}{0071bc}
\newcommand{\cmark}{\ding{51}}%
\newcommand{\xmark}{\ding{55}}%

\newtheorem{theorem}{Theorem}[section]
\newtheorem{lemma}[theorem]{Lemma}
\renewcommand{\thefootnote}{\arabic{footnote}}
\begin{abstract}
%
%
Pretrained vision-language models (VLMs), such as CLIP, achieve remarkable zero-shot performance, yet their downstream potential hinges on effective fine-tuning. Most adaptation methods typically focus on refining representation from separate modalities (text or vision) but neglect the critical role of their fused representations in the decision-making process, \emph{\ie} rational matrix that drives the final prediction \cite{chen2023domain}. 
To bridge the gap, we propose a simple yet effective \textbf{R}ational \textbf{Ada}ptaion ({RAda}) to explicitly exploit the final fused representation during fine-tuning. RAda employs a learned mask, obtained from a lightweight attention layer attached at the end of a VLM, to dynamically calibrate the contribution of each element in the rational matrix, enabling targeted adjustments to the final cross-modal interactions without incurring costly modifications to intermediate features.
Experiments in different settings (\emph{\ie} updating, or freezing pretrained encoders in adaptation, and test-time training that can only access the unlabeled test data) show that RAda serves as a versatile fine-tuning technique, improving the baseline with minimal code and performing comparably against current arts in most settings. The full source code can be found at \href{https://github.com/khufia/RAda/tree/main}{github.com/khufia/RAda}.
\end{abstract}    
\section{Introduction}
\label{sec:intro}

Recent foundation models trained on multiple modalities (\eg vision, language, audio) have demonstrated exceptional generalization across diverse tasks. Among these, vision-language models (VLMs) like CLIP~\cite{radford2021learning} and ALIGN~\cite{jia2021scaling}, pretrained on large-scale image-text pairs, achieve remarkable zero-shot classification by aligning input images with text prompts. This alignment is measured by the similarity between image and text representations, with the closest match determining the prediction.
Thanks to the rich supervision provided by the diverse data pairs, such ``zero-shot" classifiers can reason about open-vocabulary visual concepts and obtain impressive robustness to many distribution shifts. 
Nevertheless, in many occasions, it is still beneficial that pretrained VLMs can be adapted to the given data distribution through fine-tuning.  

\begin{figure}
    \centering
    \includegraphics[width=0.95\linewidth]{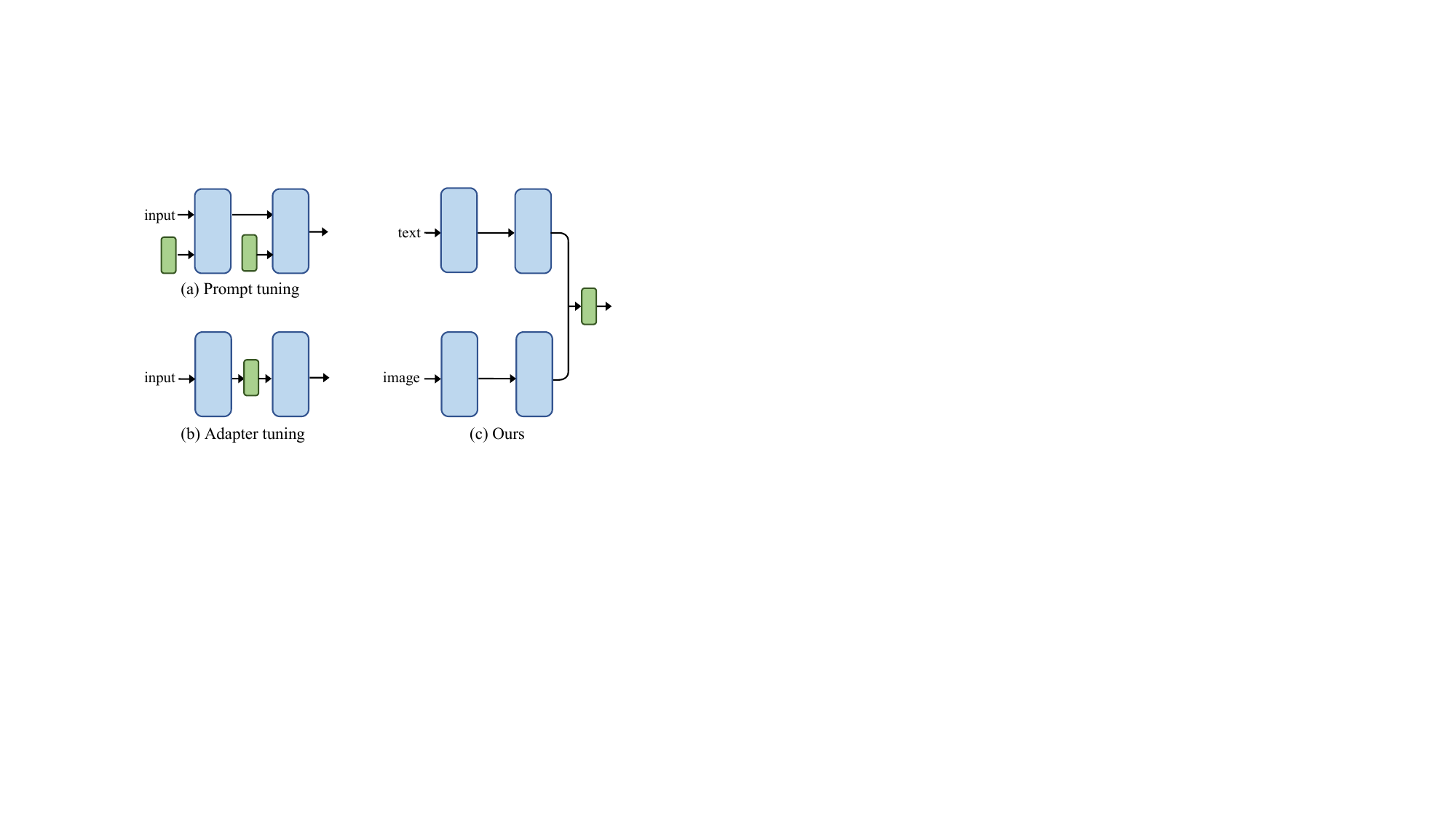}
    \vspace{-0.3 cm}
    \caption{Comparisons between fine-tuning ideas for VLMs that incorporate new parameters. Blue and green blocks denote fixed encoders and learnable parameters. Unlike previous arts that focus on intermediate features from separate modalities, we aim at the fused representations in the final decision-making process, adjusting the corresponding rational matrix~\cite{chen2023domain} to achieve adaptation.}
    \label{fig:tease}
    \vspace{-0.5 cm}
\end{figure}

There are several different fine-tuning strategies applicable for VLMs when the source data is available.
Besides fine-tuning within the standard practice of transfer learning, which updates all parameters~\cite{goyal2023finetune,kumar2022fine} or only the classification head, some recent methods suggest altering the original text or visual representations by tuning newly introduced parameters. Ideas are achieved by either incorporating learnable prompts as an addition to alter the original input embeddings~\cite{zhou2022conditional,zhou2022learning,jia2022visual,shu2022test,zang2022unified} (\ie prompt tuning, as seen in Figure~\ref{fig:tease} (a)), or inserting lightweight learnable parameters within the encoders to modify their outputs~\cite{chen2022vision,gao2024clip,zhang2021tip,ma2021simple,eichenberg2021magma,sung2022vl} (\ie adapter tuning, as seen in Figure~\ref{fig:tease} (b)).
Despite their distinct settings, most methods treat the text or visual features in isolation, overlooking a fundamental aspect of the VLM decision-making: the two features are not independently meaningful, as the final prediction emerges from the interaction of both modalities. This motivates the needs to explicitly emphasize the fused representation in fine-tuning.

A recent art~\cite{khattak2023maple} attempts to address this issue by fostering mutual synergy between the two modalities on the basis of combining vision~\cite{jia2022visual} and text~\cite{zhou2022learning} prompt tunings. While the art shows leading performance in specific scenarios, it incurs relatively large computational overhead due to its requirement for both encoders to actively participate in the intermediate prompt updating steps. More critically, the reliance on both encoders will restrict its applicability when confronting non-ViT-based~\cite{dosovitskiy2020image} architectures or in a standard transfer learning paradigm~\cite{kumar2022fine}, diminishing its practical utility in diverse fine-tuning settings.

To address these challenges, we suggest leveraging fused information from the final decision-making process to achieve lightweight and encoder-agnostic adaptation. 
Unfortunately, in conventional VLMs like CLIP, fused representation at the final stage will be buried within the similarity calculation, rendering it difficult to be accessed directly.
To explicitly surface the buried information, we extend the concept of rational matrix~\cite{chen2023domain}, which is originally developed for the classical classifying system, to VLMs. 
In its classical form, the rational matrix is defined as the entry-wise product between the feature vector and classifier weights, aiming to describe associations between each feature element and classifier weights during prediction.
Extended to VLMs, this concept depicts the fine-grained interaction between visual and textual representations, where the text encoder’s output (acting as classifier weights) interacts with the image encoder’s output (the feature vector) to form a fused representation that governs the final prediction, making it inherently suited for capturing fused information at the VLM decision-making stage.
Through a rational matrix adaptation (see Figure~\ref{fig:tease} (c)), we can thus learn complementary information from a cross-modal perspective, resulting in a more holistic understanding of the data compared to ideas that consider the different modalities in isolation (\emph{empirical and theoretical supports are provided in Sec.~\ref{sec ablation} and the supplementary material, respectively}).

We adopt a streamlined implementation for RAda. Specifically, we propose using a mask, obtained via a lightweight attention layer at the end, to dynamically adjust the contribution of each rational elements for 
achieving adaptation\footnote{In the original CLIP, contributions are all 1 for different elements.}. 
In our design, a multi-query setting is employed for the attention layer to fully exploit all available information, where multiple queries (\ie, image features, text features, and the rational matrix) are applied to the same key and value (\ie the rational matrix), with outputs averaged across all queries. 
The initial prediction can then be adapted by taking the entry-wise product between the original rational matrix and the learned mask. Our method is easy to implement, requiring minimal code upon the baseline CLIP (see Algorithm~\ref{alg 1}).
Unlike~\cite{khattak2023maple}, RAda does not modify intermediate features or require encoder participation, which not only reduces computational overhead but also ensures broader applicability in various fine-tuning settings (detailed comparisons with~\cite{khattak2023maple} are provided in Sec.~\ref{sec maple}).

%
To comprehensively evaluate RAda, besides the settings with source data (\ie (1) full fine-tuning (FFT) that updates all parameters; (2) efficient fine-tuning (EFT) that updates only the rational adapter with frozen encoders), we also evaluate it in a setting when source data is unavailable (\ie (3) test-time training (TTT) that tunes the rational adapter with unlabeled test data).
We observe that RAda exhibits consistent improvements over the baseline in all three settings and perform favorably against existing arts in most scenarios, demonstrating its versatileness and effectiveness in the VLM fine-tuning task. Meanwhile, we also show that existing fine-tuning ideas and RAda are not mutually exclusive, rather, their integration shows further improvements.

Main contributions of this work are three-fold:
\begin{itemize}
    \item We propose \emph{rational adaptation}, a novel method that extends the concept of rational matrix from the classical classifying system to VLMs, to favorably focus on the fused text and visual representations at the final decision-making process during fine-tuning. 
    \item We offer a simple and lightweight implementation for RAda, fulfilled by attaching a single attention layer at the end to learn contributions for different rational elements. This design can be seamlessly integrated into most training pipelines with minimal code.
    \item We conduct extensive experiments in three mainstream fine-tuning settings to evaluate RAda. We observe that RAda is a versatile fine-tuning idea that can consistently benefit the baseline, and it can obtain comparable performance against existing arts in most settings.
\end{itemize}

\section{Related Work}
\label{sec:formatting}

\textbf{Vision-Language models.}
Previous studies have demonstrated the effectiveness of using text supervision for various vision tasks~\cite{radford2021learning,jia2021scaling,zhang2022contrastive,li2021supervision,yao2021filip,yuan2021florence,zhai2022lit}. 
%
%
Attributed to the large-scale training data from the web, current VLMs can achieve astonishing results on a wide spectrum of vision tasks without any fine-tuning~\cite{radford2021learning,jia2021scaling}. 
%
Similar to prevailing fine-tuning methods~\cite{goyal2023finetune,zhou2022conditional,gao2024clip,shu2022test}, our implementation builds on the pretrained CLIP, aiming to enhance its performance across various fine-tuning strategies.

\noindent\textbf{Fine-tuning strategies.}
We briefly review some of the fine-tuning arts by categorizing them into the following types. 

The first common paradigm for fine-tuning is FFT. 
While it can effectively adapt models to a new distribution, the overfitting problem remains the primary concern in the literature, causing compromised robustness in diverse tasks~\cite{kumar2022fine,chen2023improved,chen2024lfme,chen2024causal}. Several ideas have been proposed lately to mitigate the issue, including LP-FT~\cite{kumar2022fine} that conducts linear probing (LP) and fine tuning (FT) in a sequential manner, and weight ensemble~\cite{wortsman2022robust} that combines weights of both the fine-tuned and pretrained models. We extend the sequential training strategy in LP-FT for RAda, where we first train the rational adapter as initialization and then finetune all parameters. Our experiments indicate that RAda can better help ease the overfitting problem than previous arts while maintaining comparable effectiveness in the training distribution.  
%

The second idea that has been widely explored is EFT, which fixes encoders during updating. The key in EFT is to introduce new learnable parameters for adaptation. Inspired by CoOp~\cite{zhou2022learning}, a wide range of nascent studies use learnable prompts in the vision or text encoders as additional inputs~\cite{zhou2022conditional,jia2022visual,shu2022test,zang2022unified,khattak2023maple}. Adapter tuning also shows its effectiveness in fine-tuning~\cite{chen2022vision,gao2024clip,zhang2021tip,ma2021simple,eichenberg2021magma,sung2022vl,shimomoto2023towards}, aiming to modify the original model by inserting layers to act on the input representations, unlike prompt tuning which modifies the inputs themselves.
RAda can be extended to EFT by tuning only the rational adapter. Unlike previous ideas, it specifically focuses on the fused representations, aiming for a more holistic understanding of the new data.

The last is the emerging TTT that updates the model in the test phase. The idea is to leverage a self-supervised task to update the model with test samples on the fly~\cite{sun2020test,wang2020tent,liu2021ttt++}. 
The concept has been explored in some recent studies to fully uncover the zero-shot potential of CLIP~\cite{shu2022test,abdul2024align,zanella2024test,zhao2024testtime}, with most approaches focusing on updating newly introduced parameters (\ie prompts) for CLIP using the combinations of a basic entropy minimization task~\cite{wang2020tent} and other hand-crafted objectiveness, such as distribution alignment~\cite{abdul2024align} or advanced feedback from a larger model~\cite{zhao2024testtime}.
In line with these efforts, we update the rational adapter during test using the same entropy minimization objective.

To the best of our knowledge, this work represents a pioneering effort to explore and adapt a fine-tuning idea that can contribute effectively in all three fine-tuning settings.

\section{Methodology}

\subsection{Preliminary}
Our method is built upon a representative VLM, CLIP, which includes two parallel encoders for mapping the text and visual inputs into feature vectors. We denote the text and visual encoders as $\mathcal{F}_t$ and $\mathcal{F}_v$ and their pretrained parameters as $\theta_t$ and $\theta_v$, respectively.
Considering a $K$-class classification task in the fine-tuning process, the visual representation $\mathbf{f} \in \mathbb{R}^D$ for a given input image $\mathbf{I} \in \mathbb{R}^{C\times H\times W}$ can be simply obtained via $\mathbf{f} = \mathcal{F}_v(\mathbf{I}, \theta_v)$, and the text representations $\mathbf{h} \in \mathbb{R}^{K\times D}$ are decided upon shared prefixed text prompts $\mathbf{p}$ for different predictions. For instance, a commonly-used $\mathbf{p}=$``a photo of a" can be adopted as the prefix prompts, and each text representation $\mathbf{h}_{i} \in \mathbb{R}^D$ is then obtained via $\mathbf{h}_{i} = \mathcal{F}_t([\mathbf{p},\mathcal{Y}_i], \theta_t)$, given $\mathcal{Y}=\{\mathcal{Y}_1,\mathcal{Y}_2,...,\mathcal{Y}_K\}$ the category-specific texts for all $K$ classes, and $[\mathbf{p},\mathcal{Y}_i] =$ ``a photo of a [class]" being a direct concatenate of $\mathbf{p}$ and $\mathcal{Y}_i$.

\begin{figure}
    \centering
    \includegraphics[width=0.97\linewidth]{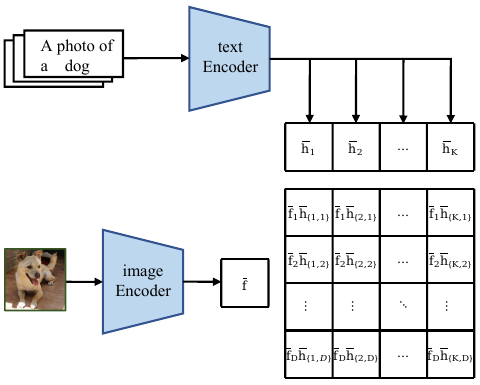}
    \vspace{-0.3cm}
    \caption{Rational matrix~\cite{chen2023domain} in the CLIP decision-making process for a given image, where predictions (\ie logits) are computed by summing each column. It fuses text and visual features and provides a fine-grained characterization of different predictions.}
    \vspace{-0.5 cm}
    \label{fig:rational}
\end{figure}

Similar to that in pretraining CLIP, the objective in fine-tuning is still contrastive learning. Take the updating of $\theta_v$ as an example, the goal is to align the input $\mathbf{f}$ with its corresponding text description $\mathbf{h}_{\ast}$ while away from others, which can be formulated as minimizing the following,
\begin{equation}
    \label{eq obj}
    \mathcal{L}_{main} = -\log \frac{\exp(<\overline{\mathbf{f}},~\overline{\mathbf{h}}_{\ast}>)}{\sum_{i=1}^K \exp(<\overline{\mathbf{f}},~\overline{\mathbf{h}}_{i}>)},
\end{equation}
where $\overline{\mathbf{f}}$ and $\overline{\mathbf{h}}$ are the $l_2$ normalized ${\mathbf{f}}$ and ${\mathbf{h}}$, $<,~>$ returns the inner product for two vectors. In evaluation, the class $k$ that with the largest logit $<\overline{\mathbf{f}},~\overline{\mathbf{h}}_{k}>$ is considered the true prediction. 
Note that in the pretraining process, $\mathcal{L}_{main}$ should be considered with another contrastive case where each text representation would correspond to different visual features, and the alignment is to ensure they are paired with the correct visual targets, same as that in~Eq.~\eqref{eq obj}.

\subsection{Rational Matrix in CLIP}
The default design in Eq.~\eqref{eq obj} cannot characterize fine-grained associations between $\mathbf{f}$ and $\mathbf{h}$ as it leads directly to the coarse final result.
To surface the fused information from the two modalities, we suggest extending the concept of rational matrix~\cite{chen2023domain} to CLIP.
The rational matrix is regarded as a fine-grained characterization of the decision-making process in~\cite{chen2023domain}. The concept is originally introduced within the classical classification system, where a linear classifier is involved for computing logits from the obtained image feature $\mathbf{f}$: given the classifier $\mathbf{W} \in \mathbb{R}^{D\times K}$ and rational matrix $\hat{\mathbf{R}} \in \mathbb{R}^{K\times D}$, the logit value for the $i$-th class is $\mathbf{o}_i = <\mathbf{f},~\mathbf{W}_{\{,i\}}> = \sum_{j=1}^D \mathbf{f}_j \mathbf{W}_{\{j, i\}} = \sum_{j=1}^D \hat{\mathbf{R}}_{\{i,j\}}$.

In CLIP, the text feature $\overline{\mathbf{h}}$ plays the same role as the ``classifier" $\mathbf{W}$ when calculating the similarity (\ie the logits computation process). When studying the decision-making process in CLIP (with the same $\mathbf{f}$), we can correspondingly depict it with a rational matrix $\mathbf{R} \in \mathbb{R}^{K\times D}$ as,
\begin{equation}
\label{eq rationale}
\mathbf{R}^\top = 
\begin{bmatrix}
\overline{\mathbf{f}}_1 \overline{\mathbf{h}}_{\{1, 1\}}  &\overline{\mathbf{f}}_1 \overline{\mathbf{h}}_{\{2, 1\}}  &$\dots$ &\overline{\mathbf{f}}_1 \overline{\mathbf{h}}_{\{K, 1\}} \\
\overline{\mathbf{f}}_2 \overline{\mathbf{h}}_{\{1, 2\}}  &\overline{\mathbf{f}}_2 \overline{\mathbf{h}}_{\{2, 2\}} &$\dots$ &\overline{\mathbf{f}}_2 \overline{\mathbf{h}}_{\{K, 2\}} \\
\vdots &\vdots & \ddots &\vdots \\
\overline{\mathbf{f}}_D \overline{\mathbf{h}}_{\{1, D\}}  &\overline{\mathbf{f}}_D \overline{\mathbf{h}}_{\{2, D\}}  &$\dots$ & \overline{\mathbf{f}}_D \overline{\mathbf{h}}_{\{K, D\}} \\
\end{bmatrix}.
\end{equation}
Similarly, the $i$-th logit in the CLIP result can be represented as $\sum_{j=1}^D \mathbf{R}_{\{i,j\}}$.
An example of obtaining $\mathbf{R}$ for an image is provided in Figure~\ref{fig:rational}. $\mathbf{R}$ encodes the interactions between the visual and textual features in the final decision-making process of CLIP, manifesting the fine-grained fused information at the final stage. Compared to the isolated $\mathbf{f}$ and $\mathbf{h}$, it offers a more holistic understanding of the data by integrating information from both modalities.

By depicting the final-decision making process with the rational matrix, objective in Eq.~\eqref{eq obj} can be rewritten as,
\begin{equation}
    \label{eq objreform}
    \mathcal{L}_{main} \triangleq -\log \frac{\exp(<\textbf{1}_D, \mathbf{R}_{\ast}>)}{\sum_{i=1}^K \exp(<\textbf{1}_D, \mathbf{R}_{i}>)},
\end{equation}
where $\textbf{1}_D$ is a D-dimensional all-one vector.
\emph{Note that reformulating Eq.~\eqref{eq obj} to Eq.~\eqref{eq objreform} does not introduce any additional parameters}. We show in the following that the new formulation in Eq.~\eqref{eq objreform} can provide a new perspective for adaptation in the fine-tuning process with minimum cost.


\subsection{Rational Adaptation in Fine-Tuning CLIP}
Unlike previous methods~\cite{jia2022visual,ma2021simple,zhou2022conditional,gao2024clip} that use various forms to extract complementary text or visual information from the new data while processing them in isolation, we suggest a new fine-tuning idea by adapting the corresponding rational matrix, which can specifically leverage fused representations from different modalities. 
Specifically, we adopt a learned mask $\mathbf{M}$, which is with continuous values and with the same shape as $\mathbf{R}$, to dynamically calibrate contributions of each rational element in making the final predictions. 
Formally, with the adaptive $\mathbf{M}$, the contrastive objective in Eq.~\eqref{eq objreform} is evolved into,
\begin{equation}
    \label{eq envolveobj}
    \mathcal{L}_{adapt} = -\log \frac{\exp(<\textbf{1}_D, (\mathbf{M}\circ\mathbf{R})_{\ast}>)}{\sum_{i=1}^K \exp(<\textbf{1}_D, (\mathbf{M}\circ\mathbf{R})_{i}>)},
\end{equation}
where $\circ$ denotes the Hadamard product.

Eq.~\eqref{eq envolveobj} can be seamlessly extended to the entropy minimization task in test-time fine-tuning~\cite{wang2020tent} as well, where the ground-truth $\mathbf{h}_{\ast}$ is unavailable. By tuning $\mathbf{M}$, we can formulate the main objective in TTT as,
\begin{equation}
\footnotesize
    \label{eq ttt}
    \mathcal{L}_{ttt} = -\sum_{j=1}^K p_j \log p_j,~~\text{s.t.}~p_j=\frac{\exp(<\textbf{1}_D, (\mathbf{M}\circ\mathbf{R})_{j} >)}{\sum_{i=1}^K \exp(<\textbf{1}_D, (\mathbf{M}\circ\mathbf{R})_{i} >)}.
\end{equation}

\begingroup
\setlength{\textfloatsep}{-5pt}
\begin{algorithm}[tb]
   \caption{PyTorch-style pseudocode for RAda in EFT.}
   \label{alg 1}
   
    \definecolor{codeblue}{rgb}{0.25,0.5,0.5}
    \lstset{
    backgroundcolor=\color{white},
  basicstyle=\fontsize{7.5pt}{7.5pt}\ttfamily\selectfont,
  columns=fullflexible,
  breaklines=true,
  captionpos=b,
  commentstyle=\fontsize{7.5pt}{7.5pt}\color{codeblue},
  keywordstyle=\fontsize{7.5pt}{7.5pt},
}
\begin{lstlisting}[language=python]
# CLIP_encoder: Include vision and text encoders
# Attn: Attention layer
# I[BxHxWxC]: Batch of visual inputs 
# T[KxL]: Text inputs in forms of "a photo of a []"

# extract and normalize features of each modality
f,   h   = CLIP_encoder(I, T)  #BxD, KxD
f_n, h_n = l2_norm(f, 1), l2_norm(h, 1)

# compute the rational matrix and the mask
f_e = f_n.unsqueeze(1).repeat(1,K,1)  #BxKxD
h_e = h_n.unsqueeze(0).repeat(B,1,1)  #BxKxD
R   = f_e * h_e 
M   = Attn(query=[f_e, h_e, R], key=R, value=R)  #BxKxD

# obtain adapted results, baseline CLIP is with M=1
logits = torch.sum(M * R, -1)  #BxK

# compute the loss
L_main = cross_entropy_loss(logits, label)
L_reg = mse_loss(M, 1)
\end{lstlisting}
\end{algorithm}
\endgroup

\noindent\textbf{Adaptively Learning $\mathbf{M}$.}
%
To adapt $\mathbf{R}$ w.r.t different inputs, we suggest obtaining $\mathbf{M}$ through a learnable function $\mathcal{F}_m$, with $\theta_m$ denoted as its parameter. 
Considering that the original ``classifier", \ie the text embeddings $\overline{\mathbf{h}}$, does not account for the presence of other to-be-distinguished classes when making a prediction, we suggest incorporating an attention mechanism~\cite{vaswani2017attention} for implementing $\mathcal{F}_m$, with the original $\mathbf{R}$ as the input query, key, and value, enabling interactions between rationale elements from different classes. 
%
%
Meanwhile, to account for the original information embedded in different modalities, we suggest also using the image and text features $\{\overline{\mathbf{f}}$, $\overline{\mathbf{h}}\}$ as additional queries for $\mathcal{F}_m$, helping it to learn more nuanced relationships and dependencies across the different rational elements.
In our implementation, we use different projectors for the different queries $\{\overline{\mathbf{f}}, \overline{\mathbf{h}}, \mathbf{R}\}$. The result is then computed from the average attention weight of these queries,
\begin{equation}
\small
\begin{aligned}
    \label{eq mask}
    \mathbf{M}' &= \mathcal{F}_m(\{\overline{\mathbf{f}}, \overline{\mathbf{h}}, \mathbf{R}\}, \theta_m)\\
    &=(\text{sfm}(\frac{Q_{\overline{\mathbf{f}}} K^\top}{\sqrt{d_K}}) + \text{sfm}(\frac{Q_{\overline{\mathbf{h}}} K^\top}{\sqrt{d_K}}) + \text{sfm}(\frac{Q_\mathbf{R} K^\top}{\sqrt{d_K}}))\frac{V}{3}\\
    &\text{s.t.}~~Q_x =  W_{x}^\top x,~~K = W_k^\top \mathbf{R},~~V = W_v^\top \mathbf{R},
\end{aligned}
\end{equation}
where sfm($\cdot$) denotes the softmax function, $W_{x}$ is the linear projector for the corresponding query $x$; $W_k$ and $W_v$ are the key and value projectors. 
%
%
%
We omit the last projection layer in $\mathcal{F}_m$, which are zero-initialized in all three settings, and we use $\mathbf{M} = \mathbf{M}' + \textbf{1}_{K\times D}$ for implementation. These designs ensure $\mathbf{M}=\textbf{1}_{K\times D}$ in the first updating step, and it does not affect the initial predictions of CLIP.

\noindent\textbf{Regularization for $\mathbf{M}$.}
Although fine-tuning can improve the in-domain (ID) accuracy, it inevitably diminishes the strong zero-shot capabilities of the original CLIP model. As such, we suggest using a regularization for $\mathbf{M}$ to explicitly maintain the zero-shot performance.
Specifically, we introduce a smoothness constraint on the mask, designed to prevent significant deviations from its original setting (\ie ensuring each element in $\mathbf{R}$ contributes equally in the initial decision-making process), which formally gives,
\begin{equation}
    \label{eq reg}
    \mathcal{L}_{reg} = \Vert \mathbf{M} - \textbf{1}_{K\times D} \Vert^2.
\end{equation}
Notably, $\mathcal{L}_{\text{reg}}$ is applied only when the pretrained encoders are frozen during fine-tuning. Since in other situations, $\mathbf{R}$ will be different from its initial state, and the pretrained information cannot be retained even with $\mathbf{M}=\textbf{1}_{K\times D}$.

\noindent\textbf{Overall Algorithm.}
Objectives in three different fine-tuning settings are summarized as, 1) in FFT, we minimize $\mathcal{L}_{adapt}$ regarding all learnable parameters; 2) in EFT, we minimize $\mathcal{L}_{adapt} + \mathcal{L}_{reg}$ w.r.t $\theta_m$; 3) in TTT, the objective is to minimize $\mathcal{L}_{ttt} + \mathcal{L}_{reg}$ regarding $\theta_m$.
Pseudo code for a representative fine-tuning setting EFT is illustrated in Algorithm~\ref{alg 1}. As seen, the proposed method is extremely simple, as it only adds a few lines on the baseline CLIP.

\begin{table}[t]
\caption{\textbf{Evaluations in the FFT setting}. Results with $\dag$ are reevaluated in our device, others are from FLYP~\cite{goyal2023finetune}~\tablefootnote{Results are diverse in the ObjectNet dataset for CLIP is mainly because FLYP uses a different version of the original dataset, as also noted in their official project~\url{https://github.com/locuslab/FLYP}.}.}
\vspace{-0.3 cm}
\centering
\scalebox{0.78}{
\begin{tabular}{@{}cccccccc@{}}
\toprule
  & \multicolumn{7}{c}{Training in Imagenet}  \\ 
  \cmidrule(l){2-8} 
\multirow{-2}{*}{Methods} & ID   & Im-V2  & Im-R &Im-A &Im-S &Object  & OOD Avg.  \\
\midrule 
CLIP & \multicolumn{1}{c|}{68.3} &61.9 &77.7 &50.0 &48.3 &55.4 & {58.7} \\
LP & \multicolumn{1}{c|}{79.9} &69.8 &70.8 &46.4 &46.9 &52.1 &{57.2} \\ 
FT & \multicolumn{1}{c|}{81.3} &70.9 &65.6 &36.7 &46.3 &49.6 &{53.8}\\ 
L2-SP & \multicolumn{1}{c|}{81.7} &71.8 &70.0 &42.5 &48.5 &56.2 &57.8\\
LP-FT & \multicolumn{1}{c|}{81.7} &72.1 &73.5 &47.6 &50.3 &58.2 &60.3\\
FLYP & \multicolumn{1}{c|}{82.6} &73.0 &71.4 &48.1 &49.6 &58.7 &60.2\\
\hline
CLIP$\dag$ & \multicolumn{1}{c|}{68.3} &61.9 &\textbf{77.7} &49.9 &48.2 &54.2 & {58.4} \\
FLYP$^\dag$ & \multicolumn{1}{c|}{\textbf{82.6}} &\textbf{72.6} &71.8 &48.5 &49.8 &54.6 &59.5\\
\rowcolor{tabhighlight}
RAda & \multicolumn{1}{c|}{78.1} &68.3 &72.8 &47.3 &46.9 &53.8 & {57.8} \\
\rowcolor{tabhighlight}
RAda-FT & \multicolumn{1}{c|}{81.4} &71.9 &75.5 &\textbf{51.7} &\textbf{50.4} &\textbf{56.8} & \textbf{61.3} \\
\bottomrule
\end{tabular}
}
\label{tab fft}
\vspace{-0.5 cm}
\end{table}

\definecolor{darkgray}{rgb}{0.6, 0.6, 0.6}

\begin{table*}[t]
\tabstyle{6pt}
    \caption{{Comparison with EFT methods in the base-to-new setting}. All methods are learned from the base classes with 16 shots. RAda + Prompt indicates combining RAda with  vision and text prompt tunings~\cite{jia2022visual, zhou2022learning}, which are also utilized in~\cite{khattak2023maple}. RAda + Adapter indicates combining RAda with adapters within both encoders (the foundation skill also adopted in~\cite{yang2024mma}). 'HM' denotes harmonic mean.}
    \vspace{-0.3 cm}
    \label{tab b2n}
\resizebox{0.95\textwidth}{!}{\begin{tabular}{c|ccc|ccc|ccc|ccc}
\hline
 \multirow{2}{*}{Methods} &
  \multicolumn{3}{c|}{{Average}} &
  \multicolumn{3}{c|}{{ImageNet}} &
  \multicolumn{3}{c|}{{Caltech101}} &
  \multicolumn{3}{c}{{OxfordPets}} \\
&
  Base & New & HM & Base & New & HM & Base & New & HM & Base & New & HM \\ \hline
CLIP~\cite{radford2021learning} & \textcolor{darkgray}{69.34} & \textcolor{darkgray}{74.22} & 71.70 & \textcolor{darkgray}{72.43} & \textcolor{darkgray}{68.14} & 70.22 
& \textcolor{darkgray}{96.84} & \textcolor{darkgray}{94.00} & 95.40 & \textcolor{darkgray}{91.17} & \textcolor{darkgray}{97.26} & 94.12 \\
CoOp~\cite{zhou2022learning} &\textcolor{darkgray}{82.69} & \textcolor{darkgray}{63.22} & 71.66 & \textcolor{darkgray}{76.47} & \textcolor{darkgray}{67.88} & 71.92 & \textcolor{darkgray}{98.00} & \textcolor{darkgray}{89.81} & 93.73 & \textcolor{darkgray}{93.67} & \textcolor{darkgray}{95.29} & 94.47 \\
CoCoOp~\cite{zhou2022conditional} &\textcolor{darkgray}{80.47} & \textcolor{darkgray}{71.69} & 75.83 &\textcolor{darkgray}{75.98} & \textcolor{darkgray}{70.43} & {73.10} & \textcolor{darkgray}{97.96} & \textcolor{darkgray}{93.81} & {95.84} & \textcolor{darkgray}{95.20} & \textcolor{darkgray}{97.69} & {96.43} \\
ProGrad~\cite{zhu2023prompt} &\textcolor{darkgray}{82.48} &\textcolor{darkgray}{70.75} &76.16 &\textcolor{darkgray}{77.02} &\textcolor{darkgray}{66.66} &71.46 &\textcolor{darkgray}{98.02} &\textcolor{darkgray}{93.89} &95.91  &\textcolor{darkgray}{95.07} &\textcolor{darkgray}{97.63} &96.33\\
CLIP-Adapter~\cite{gao2024clip} &\textcolor{darkgray}{80.83} & \textcolor{darkgray}{72.93} & 76.67 &\textcolor{darkgray}{75.78} & \textcolor{darkgray}{67.60} & 71.45
& \textcolor{darkgray}{98.32} & \textcolor{darkgray}{93.56} & 95.88 & \textcolor{darkgray}{93.73} & \textcolor{darkgray}{95.97} & 94.84 \\
KgCoOp~\cite{yao2023visual} & \textcolor{darkgray}{80.73} & \textcolor{darkgray}{73.60} & 77.00 & \textcolor{darkgray}{75.83} & \textcolor{darkgray}{69.96} & 72.78 & \textcolor{darkgray}{97.72} & \textcolor{darkgray}{94.39} & 96.03 & \textcolor{darkgray}{94.65} & \textcolor{darkgray}{97.76} & 96.18 \\
MaPLe~\cite{khattak2023maple} & \textcolor{darkgray}{82.28} & \textcolor{darkgray}{75.14} & 78.55 & \textcolor{darkgray}{76.66} & \textcolor{darkgray}{70.54} & 73.47 & \textcolor{darkgray}{97.74} & \textcolor{darkgray}{94.36} & 96.02 & \textcolor{darkgray}{95.43} & \textcolor{darkgray}{97.76} & 96.58 \\
DePT~\cite{zhang2024dept} + MaPLe & \textcolor{darkgray}{84.85} & \textcolor{darkgray}{74.82} & 79.52 & \textcolor{darkgray}{77.87} & \textcolor{darkgray}{70.23} & 73.85 & \textcolor{darkgray}{98.53} & \textcolor{darkgray}{95.03} & \textbf{96.75} & \textcolor{darkgray}{95.03} & \textcolor{darkgray}{97.83} & 96.41\\ 
MMA~\cite{yang2024mma} & \textcolor{darkgray}{83.20} & \textcolor{darkgray}{76.80} & 79.87  
& \textcolor{darkgray}{77.31} & \textcolor{darkgray}{71.00} & \textbf{74.02}  
& \textcolor{darkgray}{98.40} & \textcolor{darkgray}{94.00} & 96.15  
& \textcolor{darkgray}{95.40} & \textcolor{darkgray}{98.07} & 96.72 \\  
\rowcolor{tabhighlight}
RAda &\textcolor{darkgray}{82.16} & \textcolor{darkgray}{74.14} & {77.94} & \textcolor{darkgray}{75.50} &\textcolor{darkgray}{68.41} &71.78 &\textcolor{darkgray}{98.39} &\textcolor{darkgray}{94.32} &96.31 &\textcolor{darkgray}{94.31} &\textcolor{darkgray}{96.03} &95.16\\
\rowcolor{tabhighlight}
RAda + Prompt &\textcolor{darkgray}{84.18} & \textcolor{darkgray}{75.61} & {79.67} & \textcolor{darkgray}{77.11} &\textcolor{darkgray}{68.29} & 72.44 &\textcolor{darkgray}{98.01} &\textcolor{darkgray}{94.65} &96.31 &\textcolor{darkgray}{96.16} &\textcolor{darkgray}{97.87} &\textbf{97.01}\\
\rowcolor{tabhighlight}  
RAda + Adapter & \textcolor{darkgray}{84.32} & \textcolor{darkgray}{76.25} & \textbf{80.08}
& \textcolor{darkgray}{77.96} & \textcolor{darkgray}{70.23} & 73.89  
& \textcolor{darkgray}{98.06} & \textcolor{darkgray}{93.56} & 95.76  
& \textcolor{darkgray}{95.43} & \textcolor{darkgray}{97.99} & 96.69 

 \\ \hline
 \end{tabular}}
 \resizebox{0.95\textwidth}{!}{\begin{tabular}{c|ccc|ccc|ccc|ccc}
\hline
\multirow{2}{*}{Methods} &
  \multicolumn{3}{c|}{{StanfordCars}} &
  \multicolumn{3}{c|}{{Flowers102}} &
  \multicolumn{3}{c|}{{Food101}} &
  \multicolumn{3}{c}{{FGVCAircraft}} \\
 &
  Base & New & HM & Base & New & HM & Base & New & HM & Base & New & HM \\ \hline
CLIP~\cite{radford2021learning} & \textcolor{darkgray}{63.37} & \textcolor{darkgray}{74.89} & 68.65 & \textcolor{darkgray}{72.08} & \textcolor{darkgray}{77.80} & 74.83 & \textcolor{darkgray}{90.10} & \textcolor{darkgray}{91.22} & 90.66 & \textcolor{darkgray}{27.19} & \textcolor{darkgray}{36.29} & 31.09 \\
CoOp~\cite{zhou2022learning} & \textcolor{darkgray}{78.12} & \textcolor{darkgray}{60.40} & 68.13 & \textcolor{darkgray}{97.60} & \textcolor{darkgray}{59.67} & 74.06 & \textcolor{darkgray}{88.33} & \textcolor{darkgray}{82.26} & 85.19 & \textcolor{darkgray}{40.44} & \textcolor{darkgray}{22.30} & 28.75\\
CoCoOp~\cite{zhou2022conditional} & \textcolor{darkgray}{70.49} & \textcolor{darkgray}{73.59} & 72.01 & \textcolor{darkgray}{94.87} & \textcolor{darkgray}{71.75} & 81.71 & \textcolor{darkgray}{90.70} & \textcolor{darkgray}{91.29} & 90.99 & \textcolor{darkgray}{33.41} & \textcolor{darkgray}{23.71} & 27.74 \\
ProGrad~\cite{zhu2023prompt} & \textcolor{darkgray}{77.68} & \textcolor{darkgray}{68.63} & 72.88 & \textcolor{darkgray}{95.54} & \textcolor{darkgray}{71.87} & 82.03 & \textcolor{darkgray}{90.37} & \textcolor{darkgray}{89.59} & 89.98 & \textcolor{darkgray}{40.54} & \textcolor{darkgray}{27.57} & 32.82\\
CLIP-Adapter~\cite{gao2024clip} & \textcolor{darkgray}{73.64} & \textcolor{darkgray}{71.50} & 72.55 & \textcolor{darkgray}{96.77} & \textcolor{darkgray}{71.56} & 82.28 & \textcolor{darkgray}{90.16} & \textcolor{darkgray}{90.96} & 90.56 & \textcolor{darkgray}{35.65} & \textcolor{darkgray}{32.27} & 33.87 \\
KgCoOp~\cite{yao2023visual} & \textcolor{darkgray}{71.76} & \textcolor{darkgray}{75.04} & 73.36 & \textcolor{darkgray}{95.00} & \textcolor{darkgray}{74.73} & 83.65 & \textcolor{darkgray}{90.50} & \textcolor{darkgray}{91.70} & 91.09 & \textcolor{darkgray}{36.21} & \textcolor{darkgray}{33.55} & 34.83\\
MaPLe~\cite{khattak2023maple} & \textcolor{darkgray}{72.94} & \textcolor{darkgray}{74.00} & 73.47 & \textcolor{darkgray}{95.92} & \textcolor{darkgray}{72.46} & 82.56 & \textcolor{darkgray}{90.71} & \textcolor{darkgray}{92.05} & \textbf{91.38} & \textcolor{darkgray}{37.44} & \textcolor{darkgray}{35.61} & 36.50 \\
DePT~\cite{zhang2024dept} + MaPLe & \textcolor{darkgray}{80.93} & \textcolor{darkgray}{71.73} & 76.06 & \textcolor{darkgray}{98.03} & \textcolor{darkgray}{73.17} & 83.79 & \textcolor{darkgray}{90.33} & \textcolor{darkgray}{91.53} & 90.93 & \textcolor{darkgray}{44.53} & \textcolor{darkgray}{32.80} & 37.78\\ 
MMA~\cite{yang2024mma} & \textcolor{darkgray}{78.50} & \textcolor{darkgray}{73.10} & 75.70  
& \textcolor{darkgray}{97.77} & \textcolor{darkgray}{75.93} & \textbf{85.48}  
& \textcolor{darkgray}{90.13} & \textcolor{darkgray}{91.30} & 90.71  
& \textcolor{darkgray}{40.57} & \textcolor{darkgray}{36.33} & 38.33 \\  
\rowcolor{tabhighlight}
RAda & \textcolor{darkgray}{76.29} & \textcolor{darkgray}{73.73} & 74.99 & \textcolor{darkgray}{95.63} & \textcolor{darkgray}{72.77} & 82.65 & \textcolor{darkgray}{90.01} & \textcolor{darkgray}{90.55} & 90.28 & \textcolor{darkgray}{38.90} & \textcolor{darkgray}{33.65} & 36.09\\
\rowcolor{tabhighlight}
RAda + Prompt & \textcolor{darkgray}{79.15} & \textcolor{darkgray}{73.93} & \textbf{76.45} & \textcolor{darkgray}{97.25} & \textcolor{darkgray}{69.86} & 81.31 & \textcolor{darkgray}{90.33} & \textcolor{darkgray}{91.17} & 90.75 & \textcolor{darkgray}{41.83} & \textcolor{darkgray}{35.03} & 38.13\\
\rowcolor{tabhighlight}  
RAda + Adapter & \textcolor{darkgray}{79.36} & \textcolor{darkgray}{73.16} & 76.13  
& \textcolor{darkgray}{97.74} & \textcolor{darkgray}{75.32} & 85.08  
& \textcolor{darkgray}{90.35} & \textcolor{darkgray}{91.49} & 90.92  
& \textcolor{darkgray}{41.72} & \textcolor{darkgray}{38.09} & \textbf{39.82} 

 \\\hline
\end{tabular}}
 \resizebox{0.95\textwidth}{!}{\begin{tabular}{c|ccc|ccc|ccc|ccc}
\hline
\multirow{2}{*}{Methods} &
  \multicolumn{3}{c|}{{SUN397}} &
  \multicolumn{3}{c|}{{DTD}} &
  \multicolumn{3}{c|}{{EuroSAT}} &
  \multicolumn{3}{c}{{UCF101}} \\
 &
  Base & New & HM & Base & New & HM & Base & New & HM & Base & New & HM \\ \hline
CLIP~\cite{radford2021learning} & \textcolor{darkgray}{69.36} & \textcolor{darkgray}{75.35} & 72.23 & \textcolor{darkgray}{53.24} & \textcolor{darkgray}{59.90} & 56.37 & \textcolor{darkgray}{56.48} & \textcolor{darkgray}{64.05} & 60.03 & \textcolor{darkgray}{70.53} & \textcolor{darkgray}{77.50} & 73.85 \\    
CoOp~\cite{zhou2022learning} & \textcolor{darkgray}{81.16} & \textcolor{darkgray}{75.08} & 78.00 & \textcolor{darkgray}{80.32} & \textcolor{darkgray}{56.52} & 66.35 & \textcolor{darkgray}{79.43} & \textcolor{darkgray}{74.26} & 76.76 & \textcolor{darkgray}{84.13} & \textcolor{darkgray}{72.96} & 78.15 \\  
CoCoOp~\cite{zhou2022conditional} & \textcolor{darkgray}{79.74} & \textcolor{darkgray}{76.86} & 78.27 & \textcolor{darkgray}{77.01} & \textcolor{darkgray}{56.00} & 64.85 & \textcolor{darkgray}{87.49} & \textcolor{darkgray}{60.04} & 71.21 & \textcolor{darkgray}{82.33} & \textcolor{darkgray}{73.45} & 77.64 \\  
ProGrad~\cite{zhu2023prompt} & \textcolor{darkgray}{81.26} & \textcolor{darkgray}{74.17} & 77.55 & \textcolor{darkgray}{77.35} & \textcolor{darkgray}{52.35} & 62.45 & \textcolor{darkgray}{90.11} & \textcolor{darkgray}{60.89} & 72.67 & \textcolor{darkgray}{84.33} & \textcolor{darkgray}{74.94} & 79.35 \\  
CLIP-Adapter~\cite{gao2024clip} & \textcolor{darkgray}{81.16} & \textcolor{darkgray}{75.08} & 78.00 & \textcolor{darkgray}{80.32} & \textcolor{darkgray}{56.52} & 66.35 & \textcolor{darkgray}{79.43} & \textcolor{darkgray}{74.26} & 76.76 & \textcolor{darkgray}{84.13} & \textcolor{darkgray}{72.96} & 78.15 \\
KgCoOp~\cite{yao2023visual} & \textcolor{darkgray}{80.29} & \textcolor{darkgray}{76.53} & 78.36 & \textcolor{darkgray}{77.55} & \textcolor{darkgray}{54.99} & 64.35 & \textcolor{darkgray}{85.64} & \textcolor{darkgray}{64.34} & 73.48 & \textcolor{darkgray}{82.89} & \textcolor{darkgray}{76.67} & 79.65 \\  
MaPLe~\cite{khattak2023maple} & \textcolor{darkgray}{80.82} & \textcolor{darkgray}{78.70} & 79.75 & \textcolor{darkgray}{80.36} & \textcolor{darkgray}{59.18} & 68.16 & \textcolor{darkgray}{94.07} & \textcolor{darkgray}{73.23} & 82.35 & \textcolor{darkgray}{83.00} & \textcolor{darkgray}{78.66} & 80.77 \\  
DePT~\cite{zhang2024dept} + MaPLe & \textcolor{darkgray}{82.90} & \textcolor{darkgray}{76.40} & 79.52 & \textcolor{darkgray}{83.87} & \textcolor{darkgray}{59.93} & 69.91 & \textcolor{darkgray}{94.43} & \textcolor{darkgray}{76.23} & 84.36 & \textcolor{darkgray}{86.87} & \textcolor{darkgray}{78.10} & \textbf{82.25} \\ 
MMA~\cite{yang2024mma} & \textcolor{darkgray}{82.27} & \textcolor{darkgray}{78.57} & 80.38  
& \textcolor{darkgray}{83.20} & \textcolor{darkgray}{65.63} & 73.38  
& \textcolor{darkgray}{85.46} & \textcolor{darkgray}{82.34} & 83.87  
& \textcolor{darkgray}{86.23} & \textcolor{darkgray}{80.03} & 82.20 \\  
\rowcolor{tabhighlight}  
RAda & \textcolor{darkgray}{80.38} & \textcolor{darkgray}{75.97} & 78.11 & \textcolor{darkgray}{79.17} & \textcolor{darkgray}{58.70} & 67.42 & \textcolor{darkgray}{90.40} & \textcolor{darkgray}{74.72} & 81.82 & \textcolor{darkgray}{84.80} & \textcolor{darkgray}{76.74} & 80.57 \\   
\rowcolor{tabhighlight}  
RAda + Prompt & \textcolor{darkgray}{82.38} & \textcolor{darkgray}{77.30} & 79.76 & \textcolor{darkgray}{83.28} & \textcolor{darkgray}{60.87} & 70.33 & \textcolor{darkgray}{94.27} & \textcolor{darkgray}{84.69} & \textbf{89.22} & \textcolor{darkgray}{86.21} & \textcolor{darkgray}{78.15} & 81.98\\
\rowcolor{tabhighlight}  
RAda + Adapter & \textcolor{darkgray}{82.58} & \textcolor{darkgray}{78.77} & \textbf{80.63} 
& \textcolor{darkgray}{82.06} & \textcolor{darkgray}{67.15} & \textbf{73.86}  
& \textcolor{darkgray}{96.48} & \textcolor{darkgray}{74.69} & 84.20  
& \textcolor{darkgray}{85.78} & \textcolor{darkgray}{78.26} & 81.85

\\ \hline
\end{tabular}}
\vspace{-0.5 cm}
\end{table*}

\section{Experiments}
We conduct experiments in three fine-tuning strategies to evaluate RAda on a same NVIDIA A100 (40GB RAM). The CLIP ViT-B/16 from OpenAI is used as backbone. 

\subsection{RAda in FFT}
We extend the training paradigm in LP-PT~\cite{kumar2022fine} for RAda in this setting, where we first train $\theta_m$ (\ie referred to as RAda) and then jointly updates all parameters (\ie referred to as RAda-FT). Note in FFT, a linear classifier is used to replace the text encoder, and its weight, initialized by the text features, can then be regarded as the evolving text information.
We compare with several different FFT ideas, namely LP that only updates the new classifier, FT that updates both the image encoder and the new classifier, LP-FT~\cite{kumar2022fine}, L2-SP~\cite{xuhong2018explicit} that ensures similarity between pretrained and finetuned models, and FLYP~\cite{goyal2023finetune} that mimics the pretraining pipeline of CLIP in FFT. Cross-entropy loss is utilized, except for FLYP where the original contrastive loss is adopted.  

\noindent\textbf{Datasets and implementation details.} We use 6 datasets for evaluations: ImageNet~\cite{deng2009imagenet} is considered the ID dataset for fine-tuning the model, and 5 standard out-of-distribution (OOD) datasets (\ie, ImageNetV2~\cite{recht2019ImageNet}, ImageNet-R~\cite{hendrycks2021many}, ImageNet-A~\cite{hendrycks2021natural}, ImageNet-Sketch~\cite{wang2019learning}, and ObjectNet~\cite{barbu2019objectnet}) are used for evaluations.
%
Following~\cite{wortsman2022robust,goyal2023finetune}, we use a batch size of 512 and train for 10 epochs. For RAda, we report results after 10 epochs; for RAda-FT, we initialize the model with the pretrained weights of RAda after 5 epochs and then perform RAda-FT for another 5 epochs. Similar to~\cite{kumar2022fine}, we use diverse learning rates for RAda and RAda-FT (\ie 0.004 and 0.000004). Other settings, such as optimizers, weight decay, \etc are inherited from~\cite{wortsman2022robust}, same as~\cite{goyal2023finetune}. Please see our supplementary material for details. 

\noindent\textbf{Experimental results.} We list the results in Table~\ref{tab fft}.
%
Similar to LP, RAda underperforms FT on ID data but shows better results across most OOD datasets, except for ImageNetV2 which is close to ID. This aligns with prior theory \cite{kumar2022fine} that training later layers enhances generalization by preserving pretrained features, especially when ID and OOD distributions are substantially distinct. Since RAda operates in even later layers than the classifier, the feature-preserving theory can be further validated when comparing RAda and LP. 
Moreover, we notice that RAda-FT improves performance in all OOD datasets for FT and outperforms prior arts~\cite{goyal2023finetune,kumar2022fine}. This is because RAda-FT can better balance the tradeoff between overfitting and retaining both pretrained text and visual features, a benefit cannot be achieved by focusing different modalities in isolation.
These findings underscore the effectiveness of prioritizing final decision-making process in VLMs as a targeted FFT strategy.

\subsection{RAda in EFT}
\label{sec eft}
This experiment aims to evaluate whether the adapted rational matrix can contribute when the encoders are fixed during fine-tuning. 
Besides the baseline CLIP model, we compare our idea with some recent arts that specifically designed for EFT: CLIP-Adapter~\cite{gao2024clip}, CoOp~\cite{zhou2022learning}, CoCoOp~\cite{zhou2022conditional}, ProGrad~\cite{zhu2023prompt}, KgCoOp~\cite{yao2023visual}, MaPLe~\cite{khattak2023maple}, DePT~\cite{zhang2024dept} on the basis of MaPLe, and MMA~\cite{yang2024mma}. Results are directly cited from the paper except for~\cite{gao2024clip}, which is reimplemented in our device using the provided code.

\noindent\textbf{Datasets and implementation details.}
We test the methods with the base-to-novel generalization setting. A total of 11 datasets are utilized, including, 2 used for classification on generic objects, \ie ImageNet and Caltech101~\cite{fei2004learning}; 5 used for fine-grained classification, \ie OxfordPets~\cite{parkhi2012cats}, StanfordCars~\cite{krause20133d}, Flowers102~\cite{nilsback2008automated}, Food101~\cite{bossard2014food}, and FGVCAircraft~\cite{maji2013fine}; an scene recognition dataset SUN397 \cite{xiao2010sun}; a action recognition dataset UCF101~\cite{soomro2012ucf101}; a texture classification dataset DTD~\cite{cimpoi2014describing}; and a satellite imagery recognition EuroSAT~\cite{helber2019eurosat}. 
Batch size, learning rate, and epoch are fixed as 1, 0.0009, and 13 for all datasets, and we use 16 shots per class for the source data, same as~\cite{zhou2022conditional}. 

\begin{table*}[ht]
\centering
    \caption{\textbf{Comparisons with representative TTT methods regarding DG performance in four distribution shifts}. Here ``pretrained" denotes whether the model is pretrained in ImageNet. RAda shows comparable effectiveness against arts specialized in the TTT task.} 
    \vspace{-0.3 cm}
 \scalebox{0.95}{
    \begin{tabular}{lcccccc}
    \toprule
    &pretrained & ImageNet V2 & ImageNet Sketch & ImageNet A &  ImageNet R  & OOD Avg.\\
    \midrule
    CLIP & \xmark & 60.86 & 46.09 & 47.87 & 73.98 & 57.20 \\
    \hline
    TPT~\cite{shu2022test} & \xmark & 64.35 & 47.94 & 54.77 & 77.06 & 60.81 \\
    CoOp~\cite{zhou2022learning}+TPT & \cmark & \textbf{66.83} & 49.29  & 57.95  & 77.27  & 62.84 \\
    CoCoOp~\cite{zhou2022conditional}+TPT & \cmark & 64.85 & 48.27 & 58.47 & 78.65 & 62.61  \\
    MaPLe~\cite{khattak2023maple}+TPT & \cmark & 64.87 & 48.16 & 58.08  & 78.12 & 62.31\\
    PromptAlign~\cite{abdul2024align} & \cmark & 65.29 & \textbf{50.23} & {59.37}  &79.33 & {63.55} \\
    \rowcolor{tabhighlight}
    RAda & \xmark &64.10 &49.36 &61.17 &79.35 &63.50 \\
    \rowcolor{tabhighlight}
    RAda$^{\dag}$ & \cmark &65.10 &49.45 &\textbf{62.72} &\textbf{79.75} &\textbf{64.26}\\
    \bottomrule
    \end{tabular}}
    \label{tab ttt}
    \vspace{-0.5cm}
\end{table*}

\noindent\textbf{Experimental results.}
Results in Table~\ref{tab b2n} show that RAda increases the average base accuracy by nearly 13pp for the baseline CLIP without compromising its performance in unseen classes-a tradeoff observed in many other methods~\cite{gao2024clip,zhou2022learning,zhou2022conditional,zhu2023prompt,yao2023visual} where base accuracies are improved at the cost of novel class performances. 
These results validate the effectiveness of RAda in fast adaptation to new data while preserving the generalizability of the original CLIP.

While RAda alone does not achieve state-of-the-art performance, it remains highly competitive: among the compared arts, RAda is outperformed only by arts that specifically modify intermediate representations~\cite{khattak2023maple, zhang2024dept,yang2024mma}. 
When also interfering the intermediate representations within RAda, we show its performance can be further boosted. For instance, adopting prompt tuning for RAda (RAda + Prompt) achieves better performance than that utilized in MaPLe, while combining adapter tuning for RAda (RAda + Adapter) yields best average accuracy.
This synergy with complementary strategies highlights RAda is orthogonal to existing EFT paradigms (\eg prompt or adapter tuning).
Collectively, these results affirm RAda's potential as a competitive approach in the EFT setting.

%
%

\subsection{RAda in TTT}
Different from other strategies, TTT can only access the unlabeled test data in updating. We compare RAda with two recent methods, \ie TPT~\cite{shu2022test} and PromptAlign~\cite{abdul2024align}, within this setting. Both these two methods are developed based on the prompt tuning paradigm. Specifically, TPT extends CoOp by updating the text prompts with the entropy minimization objective, and PromptAlign extends the idea in~\cite{khattak2023maple} by including an additional distribution alignment regularization to refine both text and visual prompts.  

\noindent\textbf{Datasets and implementation details.}
Same as previous works~\cite{shu2022test,abdul2024align}, we use the 4 OOD datasets (\ie ImageNetV2, ImageNet-R, ImageNet-A, and ImageNet-Sketch) for evaluation. For every test sample, we obtain 63 of its augmented view using the same augmentation strategies in~\cite{abdul2024align} to form a batch of 64 samples, among which, we select the top 10\% confident predictions with the lowest entropy and compute the entropy loss in Eq.~\eqref{eq ttt} for the sub-batch. The offline TTT updating strategy~\cite{sun2020test} is adopted where the weights are initialized to the original state for each sample, so that the order of the arrived data does not affect the result. The learning rate is fixed as 0.0008 for all datasets, and we perform three updating steps for each of the test sample.

\noindent\textbf{Experimental results.}
As shown in Table~\ref{tab ttt}~\footnote{Results for some datasets are different for CLIP in Table~\ref{tab fft} and~\ref{tab ttt} is due to the different prefixed text prompts $\mathbf{p}$ used in these two settings.}, RAda enhances the baseline CLIP across all evaluated datasets, outperforming the naive TPT in 3 datasets and leading the average accuracy by 2.7pp. These observations validate the effectiveness of focusing the final decision-making process for adaptation in test-time.
When compared to the recent PromptAlign, RAda demonstrates strengths on half of the evaluated datasets with comparable average results, which is achieved without leveraging pretrained information in ImageNet. 
When using the same pretrained information, we observe that the performance of RAda can be further enhanced, leading all compared arts in average performance. Combined with the demonstrated efficiency of RAda in Table~\ref{tab efficient}, these results validate RAda as a strong competitor for the TTT application, even against task-specific methods.
%

\section{Analysis}

\subsection{Ablation Studies}
\label{sec ablation}
We evaluate the effectiveness of our designs using the base-to-new generalization setting in EFT, where the settings are the same as that detailed in Sec.~\ref{sec eft}. Please refer to our supplementary material for more ablation studies.

\noindent\textbf{Effectiveness of the regularization term.}
To assess the impact of the regularization term on the model performance, we evaluate RAda with and without adding $\mathcal{L}_{reg}$ in the overall objective. As shown in the 2nd row in Table \ref{tab analysis}, the base accuracy for the baseline CLIP can be improved across both settings, indicating that the the regularization term does not degrade performance on familiar classes. Meanwhile, we observe the inclusion of $\mathcal{L}_{reg}$ significantly improves the generalization performance, \ie the accuracy on novel classes, achieving an improvement of $2.5$pp. These results highlight the importance of the regularization term in maintaining the training effectiveness of RAda without compromising the superior zero-shot capability of CLIP.

\noindent\textbf{Different settings for implementing $\mathcal{F}_{m}$.} We use a multi query attention layer for implementing $\mathcal{F}_{m}$. In this section, we assess the impact of adopting different settings for $\mathcal{F}_{m}$. Namely, we first try using an MLP layer to replace the attention-based rational adapter (\ie MLP for $\mathcal{F}_{m}$), and then use the following query settings in the final attention layer: rational matrix $\mathbf{R}$, and its combination with either the image features $\overline{\mathbf{f}}$ or text features $\overline{\mathbf{h}}$. 

As listed in 3rd-6th rows from Table~\ref{tab analysis}, although using an MLP for $\mathcal{F}_{m}$ can improve the baseline, it performs inferior to using the attention-based design. This is mainly because the attention layer encourages the interactions of different to-be-distinguished classes, while the MLP layer can only process different classes independently. These results justify our motivation of using an attention layer to implement $\mathcal{F}_{m}$.
Meanwhile, we note that incorporating additional information from either text or visual modality improves the performance than rely solely on the rational matrix to act as query, key, and value. This is because fusing $\overline{\mathbf{f}}$ and $\overline{\mathbf{h}}$ may obscure their specific patterns, resulting $\mathbf{R}$ to not include all information.
%
%
%
In comparison, our multi query setting offers best results in both base and novel class accuracies, underscoring the importance of leveraging all available information in the rational matrix calibration process.

\noindent\textbf{Effectiveness of the fused information.} 
To verify if leveraging fused information is superior to ideas that consider the different modalities in isolation, we compare RAda with variants that attach attention layers to encoders, which modify the original image or text features through a learned mask from the the attached attention layer. Specifically, three variants are compared: ``$\mathcal{F}_t$ + attn" and ``$\mathcal{F}_v$ + attn", where an attention layer is attached at the end of the text and visual encoders, respectively; and a combined variant that applies separate attention layers to each encoder.

As seen in 7th-9th rows in Table~\ref{tab analysis}, adding an attention layer enhances adaptation to the training distribution, whether attached to the vision or text encoder, as the baseline accuracies increase across all three variants. Notably, the variant with attention layers attached to both the text and visual encoders performs the best, likely due to the incorporation of adapted information from both modalities.
However, despite using the same mask regularization, attaching attention layers solely to the encoders does not preserve the superior zero-shot ability of the original CLIP, as all these variants show marked declines in novel class accuracy compared to RAda.
These results verifies the superiority of using the final fused information in fine-tuning compared with these variants that process different modalities in isolation. In the supplementary material, we further provide theoretical explanations for this observation.

\begin{table}[t]
    \centering
    \caption{Comparisons of RAda with its different variants.}
    \vspace{-0.3 cm}
    \scalebox{0.9}{
    \begin{tabular}{cccc}
    \toprule
         Variants &Base acc. &New acc. & HM\\
         \midrule
         Baseline &69.34 &74.22 &71.70\\
         \hline
         W/O $\mathcal{L}_{reg}$ & 81.38 & 71.58 & 76.16 \\
         \hline
         MLP for $\mathcal{F}_{m}$ & 77.51 & 73.18 & 75.28 \\
         query=$\{\mathbf{R}\}$ & 81.99 & 73.89 & 77.72\\
         query=$\{\overline{\mathbf{h}}, \mathbf{R}\}$ & 81.93 & 74.07 & 77.80 \\
         query=$\{\overline{\mathbf{f}}, \mathbf{R}\}$ & 82.03 & 73.93 & 77.76 \\
         \hline
         (a) $\mathcal{F}_t$ + attn  &82.56 &69.62 &75.53 \\
         (b) $\mathcal{F}_v$ + attn &81.96 &71.18 &76.19  \\
         (a) + (b) &82.09 &72.46 &76.97 \\
         \hline
         \rowcolor{tabhighlight}
         RAda (query=$\{\overline{\mathbf{h}}, \overline{\mathbf{f}}, \mathbf{R}\}$) &82.16 &74.14 &77.94\\
         \bottomrule
    \end{tabular}}
    \label{tab analysis}
    \vspace{-0.5 cm}
\end{table}

\subsection{Comparisons with MaPLe~\cite{khattak2023maple}}
\label{sec maple}
The closest conceptual counterpart to RAda is MaPLe \cite{khattak2023maple}, which similarly seeks to leverage fused information during fine-tuning. Key differences between these two methods are as follows. 
(1) \textbf{Efficiency.} MaPLe updates intermediate prompts across layers, incurring high memory and computational costs. In contrast, RAda’s adaptation is confined to the final output stage, requiring minimal memory and computational resources. Efficiency comparisons in Table~\ref{tab efficient} solidifying its lightweight advantage over MaPLe.
(2) \textbf{Effectiveness.} The two methods exhibit divergent performance across the three mainstream fine-tuning settings. Specifically, while MaPLe achieves a 0.6pp average advantage over RAda in EFT (78.55 \vs 77.94), it is not applicable in the FFT setting, where RAda achieves leading performance, and it underperforms RAda by nearly 2pp in the TTT setting (62.31 \vs 64.26). These results highlight that the strengths of each method vary depending on the fine-tuning scenario and that RAda exhibits greater versatility, delivering consistently competitive results in most settings.
(3) \textbf{Applicability.} Inheriting limitations from vision prompt tuning \cite{jia2022visual}, MaPLe is restricted to the transformer-based image encoders. In contrast, RAda thrives with arbitrary encoder structures (see results in the supplementary material), further demonstrating its encoder-agnostic advantage.

\begin{table}[t]
\caption{Efficiency comparisons with MaPLe~\cite{khattak2023maple} for a single updating step in fine-tuning CLIP with a batch size of 1.} 
\vspace{-0.3 cm}
    \centering
    \scalebox{0.89}{
    \begin{tabular}{ccccc}
    \toprule
         Setting& Method&FPS($\uparrow$) &Memory($\downarrow$) &GFLOPs($\downarrow$)\\
         \hline 
         \multirow{2}*{EFT}&MaPLe &  27.20 & 1.09GB & 206.19\\
         &RAda & 50.11 & 0.49GB & 17.67\\
         \hline
         \multirow{2}*{TTT}&MaPLe & 34.51 & 1.31GB & 191.75 \\
         &RAda&68.39 &0.61GB &17.26\\
         \bottomrule
    \end{tabular}}
    \label{tab efficient}
    \vspace{-0.5 cm}
\end{table}
\section{Discussion and Conclusion}
\noindent\textbf{Future works.}
While RAda has demonstrated its effectiveness in different fine-tuning settings,a promising extension is to apply it in the pretraining phase,where a richer fused representation from massive data can be utilized. 
Meanwhile, besides the classification task, the literature could also consider exploring RAda on other downstream applications within the VLM contexts, such as image captioning and visual question answering, where a fine-grained understanding of relationships between different modalities is essential for improving the performance.

\noindent\textbf{Conclusion.}
This paper proposes a new rational adaptation method to effectively focus on the final decision-making process of CLIP, aiming to explicitly leverage fused representations from different modalities for improved performance.
The idea is achieved by a simple implementation that attaches an additional attention layer at the end to learn a mask that can adaptively decide contributions for different rational elements.
Through comprehensive experiments across various settings, we find the proposed idea can serve as a versatile fine-tuning strategy, consistently benefiting the baseline and competing favorably against existing arts.

\section*{Acknowledgment.} 
This research is supported by the MBZUAI-WIS Joint Program for AI Research.

{
    \small
    \bibliographystyle{ieeenat_fullname}
    \bibliography{rada}

\begin{thebibliography}{63}
\providecommand{\natexlab}[1]{#1}
\providecommand{\url}[1]{\texttt{#1}}
\expandafter\ifx\csname urlstyle\endcsname\relax
  \providecommand{\doi}[1]{doi: #1}\else
  \providecommand{\doi}{doi: \begingroup \urlstyle{rm}\Url}\fi

\bibitem[Abdul~Samadh et~al.(2023)Abdul~Samadh, Gani, Hussein, Khattak, Naseer, Shahbaz~Khan, and Khan]{abdul2024align}
Jameel Abdul~Samadh, Mohammad~Hanan Gani, Noor Hussein, Muhammad~Uzair Khattak, Muhammad~Muzammal Naseer, Fahad Shahbaz~Khan, and Salman~H Khan.
\newblock Align your prompts: Test-time prompting with distribution alignment for zero-shot generalization.
\newblock In \emph{NeurIPS}, 2023.

\bibitem[Barbu et~al.(2019)Barbu, Mayo, Alverio, Luo, Wang, Gutfreund, Tenenbaum, and Katz]{barbu2019objectnet}
Andrei Barbu, David Mayo, Julian Alverio, William Luo, Christopher Wang, Dan Gutfreund, Josh Tenenbaum, and Boris Katz.
\newblock Objectnet: A large-scale bias-controlled dataset for pushing the limits of object recognition models.
\newblock In \emph{NeurIPS}, 2019.

\bibitem[Bossard et~al.(2014)Bossard, Guillaumin, and Van~Gool]{bossard2014food}
Lukas Bossard, Matthieu Guillaumin, and Luc Van~Gool.
\newblock Food-101--mining discriminative components with random forests.
\newblock In \emph{ECCV}, 2014.

\bibitem[Chen et~al.(2023{\natexlab{a}})Chen, Zhang, Song, Shan, and Liu]{chen2023improved}
Liang Chen, Yong Zhang, Yibing Song, Ying Shan, and Lingqiao Liu.
\newblock Improved test-time adaptation for domain generalization.
\newblock In \emph{CVPR}, 2023{\natexlab{a}}.

\bibitem[Chen et~al.(2023{\natexlab{b}})Chen, Zhang, Song, Van Den~Hengel, and Liu]{chen2023domain}
Liang Chen, Yong Zhang, Yibing Song, Anton Van Den~Hengel, and Lingqiao Liu.
\newblock Domain generalization via rationale invariance.
\newblock In \emph{ICCV}, 2023{\natexlab{b}}.

\bibitem[Chen et~al.(2024{\natexlab{a}})Chen, Zhang, Song, Shen, and Liu]{chen2024lfme}
Liang Chen, Yong Zhang, Yibing Song, Zhiqiang Shen, and Lingqiao Liu.
\newblock Lfme: A simple framework for learning from multiple experts in domain generalization.
\newblock \emph{NeurIPS}, 2024{\natexlab{a}}.

\bibitem[Chen et~al.(2024{\natexlab{b}})Chen, Zhang, Song, Zhang, and Liu]{chen2024causal}
Liang Chen, Yong Zhang, Yibing Song, Zhen Zhang, and Lingqiao Liu.
\newblock A causal inspired early-branching structure for domain generalization.
\newblock \emph{IJCV}, 132\penalty0 (9):\penalty0 4052--4072, 2024{\natexlab{b}}.

\bibitem[Chen et~al.(2023{\natexlab{c}})Chen, Duan, Wang, He, Lu, Dai, and Qiao]{chen2022vision}
Zhe Chen, Yuchen Duan, Wenhai Wang, Junjun He, Tong Lu, Jifeng Dai, and Yu Qiao.
\newblock Vision transformer adapter for dense predictions.
\newblock In \emph{ICLR}, 2023{\natexlab{c}}.

\bibitem[Cimpoi et~al.(2014)Cimpoi, Maji, Kokkinos, Mohamed, and Vedaldi]{cimpoi2014describing}
Mircea Cimpoi, Subhransu Maji, Iasonas Kokkinos, Sammy Mohamed, and Andrea Vedaldi.
\newblock Describing textures in the wild.
\newblock In \emph{CVPR}, 2014.

\bibitem[Cover(1999)]{cover1999elements}
Thomas~M Cover.
\newblock \emph{Elements of information theory}.
\newblock John Wiley \& Sons, 1999.

\bibitem[Deng et~al.(2009)Deng, Dong, Socher, Li, Li, and Fei-Fei]{deng2009imagenet}
Jia Deng, Wei Dong, Richard Socher, Li-Jia Li, Kai Li, and Li Fei-Fei.
\newblock Imagenet: A large-scale hierarchical image database.
\newblock In \emph{CVPR}, 2009.

\bibitem[Dosovitskiy(2021)]{dosovitskiy2020image}
Alexey Dosovitskiy.
\newblock An image is worth 16x16 words: Transformers for image recognition at scale.
\newblock In \emph{ICLR}, 2021.

\bibitem[Eichenberg et~al.(2022)Eichenberg, Black, Weinbach, Parcalabescu, and Frank]{eichenberg2021magma}
Constantin Eichenberg, Sidney Black, Samuel Weinbach, Letitia Parcalabescu, and Anette Frank.
\newblock Magma--multimodal augmentation of generative models through adapter-based finetuning.
\newblock In \emph{EMNLP Findings}, 2022.

\bibitem[Fei-Fei et~al.(2004)Fei-Fei, Fergus, and Perona]{fei2004learning}
Li Fei-Fei, Rob Fergus, and Pietro Perona.
\newblock Learning generative visual models from few training examples: An incremental bayesian approach tested on 101 object categories.
\newblock In \emph{CVPRW}, 2004.

\bibitem[Gao et~al.(2024)Gao, Geng, Zhang, Ma, Fang, Zhang, Li, and Qiao]{gao2024clip}
Peng Gao, Shijie Geng, Renrui Zhang, Teli Ma, Rongyao Fang, Yongfeng Zhang, Hongsheng Li, and Yu Qiao.
\newblock Clip-adapter: Better vision-language models with feature adapters.
\newblock \emph{IJCV}, 132\penalty0 (2):\penalty0 581--595, 2024.

\bibitem[Goyal et~al.(2023)Goyal, Kumar, Garg, Kolter, and Raghunathan]{goyal2023finetune}
Sachin Goyal, Ananya Kumar, Sankalp Garg, Zico Kolter, and Aditi Raghunathan.
\newblock Finetune like you pretrain: Improved finetuning of zero-shot vision models.
\newblock In \emph{CVPR}, 2023.

\bibitem[He et~al.(2016)He, Zhang, Ren, and Sun]{he2016deep}
Kaiming He, Xiangyu Zhang, Shaoqing Ren, and Jian Sun.
\newblock Deep residual learning for image recognition.
\newblock In \emph{CVPR}, 2016.

\bibitem[Helber et~al.(2019)Helber, Bischke, Dengel, and Borth]{helber2019eurosat}
Patrick Helber, Benjamin Bischke, Andreas Dengel, and Damian Borth.
\newblock Eurosat: A novel dataset and deep learning benchmark for land use and land cover classification.
\newblock \emph{IEEE Journal of Selected Topics in Applied Earth Observations and Remote Sensing}, 12\penalty0 (7):\penalty0 2217--2226, 2019.

\bibitem[Hendrycks et~al.(2021{\natexlab{a}})Hendrycks, Basart, Mu, Kadavath, Wang, Dorundo, Desai, Zhu, Parajuli, Guo, Song, Steinhardt, and Gilmer]{hendrycks2021many}
Dan Hendrycks, Steven Basart, Norman Mu, Saurav Kadavath, Frank Wang, Evan Dorundo, Rahul Desai, Tyler Zhu, Samyak Parajuli, Mike Guo, Dawn Song, Jacob Steinhardt, and Justin Gilmer.
\newblock The many faces of robustness: A critical analysis of out-of-distribution generalization.
\newblock \emph{ICCV}, 2021{\natexlab{a}}.

\bibitem[Hendrycks et~al.(2021{\natexlab{b}})Hendrycks, Zhao, Basart, Steinhardt, and Song]{hendrycks2021natural}
Dan Hendrycks, Kevin Zhao, Steven Basart, Jacob Steinhardt, and Dawn Song.
\newblock Natural adversarial examples.
\newblock In \emph{CVPR}, 2021{\natexlab{b}}.

\bibitem[Ilharco et~al.(2021)Ilharco, Wortsman, Wightman, Gordon, Carlini, Taori, Dave, Shankar, Namkoong, Miller, Hajishirzi, Farhadi, and Schmidt]{ilharco_gabriel_2021_5143773}
Gabriel Ilharco, Mitchell Wortsman, Ross Wightman, Cade Gordon, Nicholas Carlini, Rohan Taori, Achal Dave, Vaishaal Shankar, Hongseok Namkoong, John Miller, Hannaneh Hajishirzi, Ali Farhadi, and Ludwig Schmidt.
\newblock Openclip, 2021.
\newblock If you use this software, please cite it as below.

\bibitem[Jia et~al.(2021)Jia, Yang, Xia, Chen, Parekh, Pham, Le, Sung, Li, and Duerig]{jia2021scaling}
Chao Jia, Yinfei Yang, Ye Xia, Yi-Ting Chen, Zarana Parekh, Hieu Pham, Quoc Le, Yun-Hsuan Sung, Zhen Li, and Tom Duerig.
\newblock Scaling up visual and vision-language representation learning with noisy text supervision.
\newblock In \emph{ICML}, 2021.

\bibitem[Jia et~al.(2022)Jia, Tang, Chen, Cardie, Belongie, Hariharan, and Lim]{jia2022visual}
Menglin Jia, Luming Tang, Bor-Chun Chen, Claire Cardie, Serge Belongie, Bharath Hariharan, and Ser-Nam Lim.
\newblock Visual prompt tuning.
\newblock In \emph{ECCV}, 2022.

\bibitem[Khattak et~al.(2023)Khattak, Rasheed, Maaz, Khan, and Khan]{khattak2023maple}
Muhammad~Uzair Khattak, Hanoona Rasheed, Muhammad Maaz, Salman Khan, and Fahad~Shahbaz Khan.
\newblock Maple: Multi-modal prompt learning.
\newblock In \emph{CVPR}, 2023.

\bibitem[Krause et~al.(2013)Krause, Stark, Deng, and Fei-Fei]{krause20133d}
Jonathan Krause, Michael Stark, Jia Deng, and Li Fei-Fei.
\newblock 3d object representations for fine-grained categorization.
\newblock In \emph{ICCVW}, 2013.

\bibitem[Kumar et~al.(2022)Kumar, Raghunathan, Jones, Ma, and Liang]{kumar2022fine}
Ananya Kumar, Aditi Raghunathan, Robbie Jones, Tengyu Ma, and Percy Liang.
\newblock Fine-tuning can distort pretrained features and underperform out-of-distribution.
\newblock In \emph{ICLR}, 2022.

\bibitem[Li et~al.(2022)Li, Liang, Zhao, Cui, Ouyang, Shao, Yu, and Yan]{li2021supervision}
Yangguang Li, Feng Liang, Lichen Zhao, Yufeng Cui, Wanli Ouyang, Jing Shao, Fengwei Yu, and Junjie Yan.
\newblock Supervision exists everywhere: A data efficient contrastive language-image pre-training paradigm.
\newblock In \emph{ICLR}, 2022.

\bibitem[Liu et~al.(2021)Liu, Kothari, Van~Delft, Bellot-Gurlet, Mordan, and Alahi]{liu2021ttt++}
Yuejiang Liu, Parth Kothari, Bastien Van~Delft, Baptiste Bellot-Gurlet, Taylor Mordan, and Alexandre Alahi.
\newblock Ttt++: When does self-supervised test-time training fail or thrive?
\newblock In \emph{NeurIPS}, 2021.

\bibitem[Loshchilov(2019)]{loshchilov2017decoupled}
I Loshchilov.
\newblock Decoupled weight decay regularization.
\newblock In \emph{ICLR}, 2019.

\bibitem[Ma et~al.(2021)Ma, Geng, Wang, Shao, Lu, Li, Gao, and Qiao]{ma2021simple}
Teli Ma, Shijie Geng, Mengmeng Wang, Jing Shao, Jiasen Lu, Hongsheng Li, Peng Gao, and Yu Qiao.
\newblock A simple long-tailed recognition baseline via vision-language model.
\newblock \emph{arXiv:2111.14745}, 2021.

\bibitem[Maji et~al.(2013)Maji, Rahtu, Kannala, Blaschko, and Vedaldi]{maji2013fine}
Subhransu Maji, Esa Rahtu, Juho Kannala, Matthew Blaschko, and Andrea Vedaldi.
\newblock Fine-grained visual classification of aircraft.
\newblock \emph{arXiv:1306.5151}, 2013.

\bibitem[Nilsback and Zisserman(2008)]{nilsback2008automated}
Maria-Elena Nilsback and Andrew Zisserman.
\newblock Automated flower classification over a large number of classes.
\newblock In \emph{ICVGIP}, 2008.

\bibitem[Parkhi et~al.(2012)Parkhi, Vedaldi, Zisserman, and Jawahar]{parkhi2012cats}
Omkar~M Parkhi, Andrea Vedaldi, Andrew Zisserman, and CV Jawahar.
\newblock Cats and dogs.
\newblock In \emph{CVPR}, 2012.

\bibitem[Radford et~al.(2021)Radford, Kim, Hallacy, Ramesh, Goh, Agarwal, Sastry, Askell, Mishkin, Clark, et~al.]{radford2021learning}
Alec Radford, Jong~Wook Kim, Chris Hallacy, Aditya Ramesh, Gabriel Goh, Sandhini Agarwal, Girish Sastry, Amanda Askell, Pamela Mishkin, Jack Clark, et~al.
\newblock Learning transferable visual models from natural language supervision.
\newblock In \emph{ICML}, 2021.

\bibitem[Recht et~al.(2019)Recht, Roelofs, Schmidt, and Shankar]{recht2019ImageNet}
Benjamin Recht, Rebecca Roelofs, Ludwig Schmidt, and Vaishaal Shankar.
\newblock Do imagenet classifiers generalize to imagenet?
\newblock In \emph{ICML}, 2019.

\bibitem[Shimomoto et~al.(2023)Shimomoto, Marrese-Taylor, Takamura, Kobayashi, Nakayama, and Miyao]{shimomoto2023towards}
Erica~K Shimomoto, Edison Marrese-Taylor, Hiroya Takamura, Ichiro Kobayashi, Hideki Nakayama, and Yusuke Miyao.
\newblock Towards parameter-efficient integration of pre-trained language models in temporal video grounding.
\newblock In \emph{ACL Findings}, 2023.

\bibitem[Shu et~al.(2022)Shu, Nie, Huang, Yu, Goldstein, Anandkumar, and Xiao]{shu2022test}
Manli Shu, Weili Nie, De-An Huang, Zhiding Yu, Tom Goldstein, Anima Anandkumar, and Chaowei Xiao.
\newblock Test-time prompt tuning for zero-shot generalization in vision-language models.
\newblock In \emph{NeurIPS}, 2022.

\bibitem[Soomro(2012)]{soomro2012ucf101}
K Soomro.
\newblock Ucf101: A dataset of 101 human actions classes from videos in the wild.
\newblock \emph{arXiv:1212.0402}, 2012.

\bibitem[Sun et~al.(2020)Sun, Wang, Liu, Miller, Efros, and Hardt]{sun2020test}
Yu Sun, Xiaolong Wang, Zhuang Liu, John Miller, Alexei Efros, and Moritz Hardt.
\newblock Test-time training with self-supervision for generalization under distribution shifts.
\newblock In \emph{ICML}, 2020.

\bibitem[Sung et~al.(2022)Sung, Cho, and Bansal]{sung2022vl}
Yi-Lin Sung, Jaemin Cho, and Mohit Bansal.
\newblock Vl-adapter: Parameter-efficient transfer learning for vision-and-language tasks.
\newblock In \emph{CVPR}, 2022.

\bibitem[Tishby et~al.(2000)Tishby, Pereira, and Bialek]{tishby2000information}
Naftali Tishby, Fernando~C Pereira, and William Bialek.
\newblock The information bottleneck method.
\newblock \emph{arXiv preprint physics/0004057}, 2000.

\bibitem[Van~der Maaten and Hinton(2008)]{van2008visualizing}
Laurens Van~der Maaten and Geoffrey Hinton.
\newblock Visualizing data using t-sne.
\newblock \emph{JMLR}, 9\penalty0 (11), 2008.

\bibitem[Vaswani(2017)]{vaswani2017attention}
A Vaswani.
\newblock Attention is all you need.
\newblock In \emph{NeurIPS}, 2017.

\bibitem[Wang et~al.(2021)Wang, Shelhamer, Liu, Olshausen, and Darrell]{wang2020tent}
Dequan Wang, Evan Shelhamer, Shaoteng Liu, Bruno Olshausen, and Trevor Darrell.
\newblock Tent: Fully test-time adaptation by entropy minimization.
\newblock In \emph{ICLR}, 2021.

\bibitem[Wang et~al.(2019)Wang, Ge, Lipton, and Xing]{wang2019learning}
Haohan Wang, Songwei Ge, Zachary Lipton, and Eric~P Xing.
\newblock Learning robust global representations by penalizing local predictive power.
\newblock In \emph{NeurIPS}, 2019.

\bibitem[Wortsman et~al.(2022)Wortsman, Ilharco, Kim, Li, Kornblith, Roelofs, Lopes, Hajishirzi, Farhadi, Namkoong, et~al.]{wortsman2022robust}
Mitchell Wortsman, Gabriel Ilharco, Jong~Wook Kim, Mike Li, Simon Kornblith, Rebecca Roelofs, Raphael~Gontijo Lopes, Hannaneh Hajishirzi, Ali Farhadi, Hongseok Namkoong, et~al.
\newblock Robust fine-tuning of zero-shot models.
\newblock In \emph{CVPR}, 2022.

\bibitem[Xiao et~al.(2010)Xiao, Hays, Ehinger, Oliva, and Torralba]{xiao2010sun}
Jianxiong Xiao, James Hays, Krista~A Ehinger, Aude Oliva, and Antonio Torralba.
\newblock Sun database: Large-scale scene recognition from abbey to zoo.
\newblock In \emph{CVPR}, 2010.

\bibitem[Xuhong et~al.(2018)Xuhong, Grandvalet, and Davoine]{xuhong2018explicit}
LI Xuhong, Yves Grandvalet, and Franck Davoine.
\newblock Explicit inductive bias for transfer learning with convolutional networks.
\newblock In \emph{ICML}, 2018.

\bibitem[Yang et~al.(2024)Yang, Zhang, Wang, and Xie]{yang2024mma}
Lingxiao Yang, Ru-Yuan Zhang, Yanchen Wang, and Xiaohua Xie.
\newblock Mma: Multi-modal adapter for vision-language models.
\newblock In \emph{CVPR}, 2024.

\bibitem[Yao et~al.(2023)Yao, Zhang, and Xu]{yao2023visual}
Hantao Yao, Rui Zhang, and Changsheng Xu.
\newblock Visual-language prompt tuning with knowledge-guided context optimization.
\newblock In \emph{CVPR}, 2023.

\bibitem[Yao et~al.(2022)Yao, Huang, Hou, Lu, Niu, Xu, Liang, Li, Jiang, and Xu]{yao2021filip}
Lewei Yao, Runhui Huang, Lu Hou, Guansong Lu, Minzhe Niu, Hang Xu, Xiaodan Liang, Zhenguo Li, Xin Jiang, and Chunjing Xu.
\newblock Filip: Fine-grained interactive language-image pre-training.
\newblock In \emph{ICLR}, 2022.

\bibitem[Yuan et~al.(2021)Yuan, Chen, Chen, Codella, Dai, Gao, Hu, Huang, Li, Li, et~al.]{yuan2021florence}
Lu Yuan, Dongdong Chen, Yi-Ling Chen, Noel Codella, Xiyang Dai, Jianfeng Gao, Houdong Hu, Xuedong Huang, Boxin Li, Chunyuan Li, et~al.
\newblock Florence: A new foundation model for computer vision.
\newblock \emph{arXiv:2111.11432}, 2021.

\bibitem[Zanella and Ben~Ayed(2024)]{zanella2024test}
Maxime Zanella and Ismail Ben~Ayed.
\newblock On the test-time zero-shot generalization of vision-language models: Do we really need prompt learning?
\newblock In \emph{CVPR}, 2024.

\bibitem[Zang et~al.(2022)Zang, Li, Zhou, Huang, and Loy]{zang2022unified}
Yuhang Zang, Wei Li, Kaiyang Zhou, Chen Huang, and Chen~Change Loy.
\newblock Unified vision and language prompt learning.
\newblock \emph{arXiv:2210.07225}, 2022.

\bibitem[Zhai et~al.(2022)Zhai, Wang, Mustafa, Steiner, Keysers, Kolesnikov, and Beyer]{zhai2022lit}
Xiaohua Zhai, Xiao Wang, Basil Mustafa, Andreas Steiner, Daniel Keysers, Alexander Kolesnikov, and Lucas Beyer.
\newblock Lit: Zero-shot transfer with locked-image text tuning.
\newblock In \emph{CVPR}, 2022.

\bibitem[Zhai et~al.(2023)Zhai, Mustafa, Kolesnikov, and Beyer]{zhai2023sigmoid}
Xiaohua Zhai, Basil Mustafa, Alexander Kolesnikov, and Lucas Beyer.
\newblock Sigmoid loss for language image pre-training.
\newblock In \emph{ICCV}, 2023.

\bibitem[Zhang et~al.(2024)Zhang, Wu, Gao, Shen, and Song]{zhang2024dept}
Ji Zhang, Shihan Wu, Lianli Gao, Heng~Tao Shen, and Jingkuan Song.
\newblock Dept: Decoupled prompt tuning.
\newblock In \emph{CVPR}, 2024.

\bibitem[Zhang et~al.(2022{\natexlab{a}})Zhang, Fang, Zhang, Gao, Li, Dai, Qiao, and Li]{zhang2021tip}
Renrui Zhang, Rongyao Fang, Wei Zhang, Peng Gao, Kunchang Li, Jifeng Dai, Yu Qiao, and Hongsheng Li.
\newblock Tip-adapter: Training-free clip-adapter for better vision-language modeling.
\newblock In \emph{ECCV}, 2022{\natexlab{a}}.

\bibitem[Zhang et~al.(2022{\natexlab{b}})Zhang, Jiang, Miura, Manning, and Langlotz]{zhang2022contrastive}
Yuhao Zhang, Hang Jiang, Yasuhide Miura, Christopher~D Manning, and Curtis~P Langlotz.
\newblock Contrastive learning of medical visual representations from paired images and text.
\newblock In \emph{Machine Learning for Healthcare Conference}, 2022{\natexlab{b}}.

\bibitem[Zhao et~al.(2024)Zhao, Wang, Zhu, and Yang]{zhao2024testtime}
Shuai Zhao, Xiaohan Wang, Linchao Zhu, and Yi Yang.
\newblock Test-time adaptation with {CLIP} reward for zero-shot generalization in vision-language models.
\newblock In \emph{ICLR}, 2024.

\bibitem[Zhou et~al.(2022{\natexlab{a}})Zhou, Yang, Loy, and Liu]{zhou2022conditional}
Kaiyang Zhou, Jingkang Yang, Chen~Change Loy, and Ziwei Liu.
\newblock Conditional prompt learning for vision-language models.
\newblock In \emph{CVPR}, 2022{\natexlab{a}}.

\bibitem[Zhou et~al.(2022{\natexlab{b}})Zhou, Yang, Loy, and Liu]{zhou2022learning}
Kaiyang Zhou, Jingkang Yang, Chen~Change Loy, and Ziwei Liu.
\newblock Learning to prompt for vision-language models.
\newblock \emph{IJCV}, 130\penalty0 (9):\penalty0 2337--2348, 2022{\natexlab{b}}.

\bibitem[Zhu et~al.(2023)Zhu, Niu, Han, Wu, and Zhang]{zhu2023prompt}
Beier Zhu, Yulei Niu, Yucheng Han, Yue Wu, and Hanwang Zhang.
\newblock Prompt-aligned gradient for prompt tuning.
\newblock In \emph{ICCV}, 2023.

\end{thebibliography}
}

\clearpage
\appendix
\setcounter{page}{1}
\maketitlesupplementary
\setcounter{section}{0}
In this supplementary material, we provide,

\begin{enumerate}[labelsep=0.2em]
    \item Theoretical support for using fused information in Sec.~\ref{sec theory}.

    \item Visualizations for RAda in Sec.~\ref{sec visual}.

    \item Detailed settings of RAda in the FFT setting in Sec.~\ref{sec fftsetting}.

    \item Extending other fine-tuning ideas in FFT in Sec.~\ref{sec morefftexp}.

    \item Ablation studies on $\mathcal{L}_{reg}$ in Sec.~\ref{sec weight}.

    \item Experiments with different backbones in Sec.~\ref{sec backbones}.

    \item Experiments with different VLMs in Sec.~\ref{sec vlms}.

    \item Experiments for using more attention layers in Sec.~\ref{sec moreattn}.
\end{enumerate}

\section{Theoretical Support for Utilizing Fused Information Over Isolated Representation}
\label{sec theory}
Our empirical observations indicate that adapting the rational matrix yields better performance than adapting the different modalities in isolation.
In this section, we provide theoretical explanations for justification the selection of utilizing the final fused information rather than the isolated representations in fine-tuning. In particular, we demonstrate the advantage of RAda against three fine-tuning ideas (\ie separately adapt the image features or text features in isolation or jointly adapt both features). The explanation framework is grounded in information theory and statistical sufficiency. Below is a step-by-step proofs.

Given the random sampled class label $\mathbf{Y}$, image embedding $\mathbf{f} \in \mathbb{R}^{D}$, text embedding $\mathbf{h} \in \mathbb{R}^{K \times D}$, rational matrix $\mathbf{R} \in \mathbb{R}^{K \times D}$ (each element defined as $\mathbf{R}_{i,j} = \textbf{f}_j \cdot \textbf{h}_{i,j}$ \footnote{The two embeddings are both normalized in this section.}), we present the adaptation of image or text embedding as learning task-specific transformations: $\mathbf{f} \mapsto \mathbf{f} \circ \mathbf{M}_\mathbf{f}$, $\mathbf{h} \mapsto \mathbf{h} \circ \mathbf{M}_\mathbf{h}$, and similar for the rational matrix: $\mathbf{R} \mapsto \mathbf{R} \circ \mathbf{M}$, with $\circ$ element-wise product and $\mathbf{M}_\mathbf{f} \in \mathbb{R}^{D}, \mathbf{M}_\mathbf{h} \in \mathbb{R}^{K \times D}$, and $\mathbf{M} \in \mathbb{R}^{K \times D}$ being learnable parameters. 
We first have,
\begin{lemma}
\label{lemma1}
    The rational matrix $\mathbf{R}$ is a sufficient statistic of $\mathbf{Y}$. Formally, by the definition,
    \begin{equation}
        p(\mathbf{Y}\vert \mathbf{f}, \mathbf{h}) = p(\mathbf{Y}\vert \mathbf{R}),
    \end{equation}
\end{lemma}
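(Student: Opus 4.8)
The plan is to exhibit the CLIP label posterior as a function of $\mathbf{R}$ alone and then invoke the Bayesian characterization of sufficiency. I would first make the modeling assumption explicit: under the CLIP decision rule, the predictive distribution over classes is the softmax over the normalized similarities,
\begin{equation}
p(\mathbf{Y}=i \mid \mathbf{f}, \mathbf{h}) = \frac{\exp(\langle \mathbf{f}, \mathbf{h}_i\rangle)}{\sum_{k=1}^K \exp(\langle \mathbf{f}, \mathbf{h}_k\rangle)}.
\end{equation}
The crucial step is the identity $\langle \mathbf{f}, \mathbf{h}_i\rangle = \sum_{j=1}^D \mathbf{f}_j \mathbf{h}_{i,j} = \sum_{j=1}^D \mathbf{R}_{i,j}$, already used to pass from Eq.~\eqref{eq obj} to Eq.~\eqref{eq objreform}. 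Substituting it shows that every logit, and hence the entire posterior, depends on $(\mathbf{f},\mathbf{h})$ \emph{only} through $\mathbf{R}$. Writing $g(i,\mathbf{R})$ for the resulting expression, this first stage establishes $p(\mathbf{Y}=i \mid \mathbf{f}, \mathbf{h}) = g(i,\mathbf{R})$, i.e. the pair $(\mathbf{f},\mathbf{h})$ enters the label posterior purely via the feature-wise sums of the rational matrix.

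Second, I would upgrade this to the claimed equality of conditionals using the tower property. Since $\mathbf{R}=\mathbf{R}(\mathbf{f},\mathbf{h})$ is a deterministic (measurable) function of the embeddings, conditioning on $(\mathbf{f},\mathbf{h})$ refines conditioning on $\mathbf{R}$, so
\begin{equation}
p(\mathbf{Y}=i\mid\mathbf{R}) = \mathbb{E}\big[\,p(\mathbf{Y}=i\mid\mathbf{f},\mathbf{h}) \,\big|\, \mathbf{R}\,\big] = \mathbb{E}\big[\,g(i,\mathbf{R}) \,\big|\, \mathbf{R}\,\big] = g(i,\mathbf{R}),
\end{equation}
where the last equality holds because $g(i,\mathbf{R})$ is already $\mathbf{R}$-measurable. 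Chaining the two displays yields $p(\mathbf{Y}\mid\mathbf{R}) = p(\mathbf{Y}\mid\mathbf{f},\mathbf{h})$, which is exactly Lemma~\ref{lemma1} and, equivalently, certifies $\mathbf{R}$ as a Bayes-sufficient statistic for $\mathbf{Y}$. An alternative route I would mention is the Fisher–Neyman factorization of the joint density $p(\mathbf{f},\mathbf{h},\mathbf{Y})$, which reaches the same conclusion once the likelihood is seen to depend on $\mathbf{Y}$ only through $\mathbf{R}$.

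The main obstacle is conceptual rather than computational: $\mathbf{R}$ is a strictly coarser summary than $(\mathbf{f},\mathbf{h})$, since the map is many-to-one (one cannot recover the individual factors $\mathbf{f}_j$ and $\mathbf{h}_{i,j}$ from the product $\mathbf{R}_{i,j}$, let alone from the sums $\sum_j \mathbf{R}_{i,j}$). One must therefore argue that the information discarded in passing to $\mathbf{R}$ carries no signal about $\mathbf{Y}$. The derivation above resolves this precisely: the softmax model lets the label depend on the embeddings solely through the $D$-term sums $\sum_{j=1}^D \mathbf{R}_{i,j}$, so everything finer is irrelevant for prediction. I would be careful to state that this sufficiency is relative to the assumed CLIP likelihood, and I would flag the implicit regularity condition (a well-defined conditional law of $\mathbf{Y}$ given the embeddings) so that the tower-property step is fully justified.
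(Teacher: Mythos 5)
Your proposal is correct and takes essentially the same route as the paper's proof: both hinge on the identity $\langle\overline{\mathbf{f}},\overline{\mathbf{h}}_i\rangle = \sum_{j=1}^D \mathbf{R}_{\{i,j\}}$, so that the CLIP prediction rule (and hence the label posterior) depends on $(\mathbf{f},\mathbf{h})$ only through $\mathbf{R}$. The only difference is that the paper asserts sufficiency directly at that point, whereas you also justify the passage from ``the posterior factors through $\mathbf{R}$'' to the stated equality $p(\mathbf{Y}\vert\mathbf{R}) = p(\mathbf{Y}\vert\mathbf{f},\mathbf{h})$ via the tower property --- a rigor-filling step on the same argument, not a different one.
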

where $p(\mathbf{Y}\vert \mathbf{f}, \mathbf{h})$ denotes that the prediction in CLIP relies on both the image and the text embeddings.
\begin{proof}
    In the CLIP model, the prediction rule depends only on the inner products $\sum_j \mathbf{R}_{i,j}$, which are functions of $\mathbf{R}$. Thus, the likelihood $p(\mathbf{Y} \vert \mathbf{f}, \mathbf{h})$ depends on $\mathbf{f}$ and $\mathbf{h}$ only through $\mathbf{R}$. Therefore, $\mathbf{R}$ is a sufficient statistic for $\mathbf{Y}$.
\end{proof}
Given $\mathbf{R}$ is sufficient for $\mathbf{Y}$, we thus have equality between mutual informations: $I(\mathbf{Y};\mathbf{R}) = I(\mathbf{Y;\mathbf{f}, \mathbf{h}})$.

\begin{lemma}
\label{lemma2}
    Adapting $\mathbf{R}$ achieves mutual information with $\mathbf{Y}$ no less than adapting $\mathbf{f}$ or $\mathbf{h}$. In particular,
    \begin{equation}
    \small
    \begin{aligned}
    \label{eq lemma2}
        I(\mathbf{Y;\mathbf{R} \circ \mathbf{M}}) \geq  \max \{I(\mathbf{Y};\mathbf{f} \circ \mathbf{M}_\mathbf{f}, \mathbf{h}),  I(\mathbf{Y};\mathbf{h} \circ \mathbf{M}_\mathbf{h}, \mathbf{f})\}.
    \end{aligned}
    \end{equation}
\end{lemma}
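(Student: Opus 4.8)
The plan is to show that $\mathbf{M}$ can, in particular, be chosen to recover the transformations applied to $\mathbf{f}$ or $\mathbf{h}$ alone, so the adapted rational matrix retains at least as much information about $\mathbf{Y}$ as either isolated adaptation. The key structural fact is that $\mathbf{R}$ is built entrywise as $\mathbf{R}_{i,j} = \overline{\mathbf{f}}_j\,\overline{\mathbf{h}}_{i,j}$, so a multiplicative mask on $\mathbf{R}$ acts independently on each coordinate and can emulate a mask placed on $\mathbf{f}$ or on $\mathbf{h}$. First I would make this emulation explicit: given any $\mathbf{M}_\mathbf{f}\in\mathbb{R}^D$, set $\mathbf{M}_{i,j} = (\mathbf{M}_\mathbf{f})_j$ (broadcast the image mask across all $K$ rows); then $(\mathbf{R}\circ\mathbf{M})_{i,j} = \overline{\mathbf{f}}_j (\mathbf{M}_\mathbf{f})_j \,\overline{\mathbf{h}}_{i,j}$, which is exactly the rational matrix formed from the adapted image feature $\mathbf{f}\circ\mathbf{M}_\mathbf{f}$ and the original $\mathbf{h}$. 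Symmetrically, choosing $\mathbf{M}_{i,j} = (\mathbf{M}_\mathbf{h})_{i,j}$ reproduces the rational matrix of $(\mathbf{f}, \mathbf{h}\circ\mathbf{M}_\mathbf{h})$.

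Next I would invoke the sufficiency argument from Lemma~\ref{lemma1}. Since the adapted rational matrix under the image-mask choice is literally the rational matrix of the pair $(\mathbf{f}\circ\mathbf{M}_\mathbf{f}, \mathbf{h})$, Lemma~\ref{lemma1} gives $p(\mathbf{Y}\mid \mathbf{f}\circ\mathbf{M}_\mathbf{f}, \mathbf{h}) = p(\mathbf{Y}\mid \mathbf{R}\circ\mathbf{M})$ for this particular $\mathbf{M}$, hence $I(\mathbf{Y}; \mathbf{R}\circ\mathbf{M}) = I(\mathbf{Y}; \mathbf{f}\circ\mathbf{M}_\mathbf{f}, \mathbf{h})$ for that choice. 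The same holds on the text side. The final step is a maximization argument: the left-hand side of Eq.~\eqref{eq lemma2} is the mutual information under the \emph{optimal} mask $\mathbf{M}$ (or at least a supremum over admissible masks), so it dominates the value achieved at any specific feasible $\mathbf{M}$; in particular it dominates both the image-emulating and text-emulating choices, which yields the $\max$ on the right-hand side. Combining these gives the claimed inequality.

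The main obstacle I anticipate is making precise what $I(\mathbf{Y}; \mathbf{R}\circ\mathbf{M})$ on the left actually denotes — the statement is only meaningful if the mask is optimized (or ranges over a family rich enough to contain the two broadcast patterns above); a fixed arbitrary $\mathbf{M}$ would not give the inequality. I would therefore state at the outset that $\mathbf{M}$ is the learned/optimal mask, i.e. the left side is a supremum over the mask class, and verify that this class contains the row-broadcast of any $\mathbf{M}_\mathbf{f}$ and any $\mathbf{M}_\mathbf{h}$ (which holds since $\mathbf{M}\in\mathbb{R}^{K\times D}$ has enough degrees of freedom to represent both). A secondary subtlety is that the inequality $I(\mathbf{Y}; \mathbf{R}\circ\mathbf{M}) \ge I(\mathbf{Y}; \mathbf{f}\circ\mathbf{M}_\mathbf{f}, \mathbf{h})$ must hold for the supremum over $\mathbf{M}_\mathbf{f}$ on the right as well if the right side is itself optimized; I would handle this by noting the emulation is valid for \emph{every} $\mathbf{M}_\mathbf{f}$ pointwise, so taking suprema on both sides preserves the relation. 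Once the emulation identity and the meaning of the optimized mutual information are pinned down, the rest is a short application of Lemma~\ref{lemma1} and monotonicity of the supremum.
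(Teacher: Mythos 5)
Your proposal is correct and follows essentially the same route as the paper: both hinge on the emulation identity that a row-broadcast mask $\mathbf{M}_{\mathbf{f}} \otimes \mathbf{1}_K^\intercal$ applied to $\mathbf{R}$ reproduces the rational matrix of $(\mathbf{f}\circ\mathbf{M}_\mathbf{f},\mathbf{h})$ (and a same-shape mask reproduces that of $(\mathbf{f},\mathbf{h}\circ\mathbf{M}_\mathbf{h})$), combined with the sufficiency from Lemma~\ref{lemma1} and the observation that the general mask class strictly contains these structured ones. If anything, your explicit reading of the left-hand side as a supremum over the mask class is cleaner than the paper's somewhat loose appeal to the data processing inequality, which in the paper's own phrasing only yields equality "by constraining $\mathbf{M}$" to the very broadcast patterns you describe.
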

\begin{proof}
    Revisiting the first term in RHS of Eq.~\eqref{eq lemma2}, we can represent it as: $ I(\mathbf{Y};\mathbf{f} \circ \mathbf{M}_\mathbf{f}, \mathbf{h}) = I(\mathbf{Y}; \mathbf{R} \circ (\mathbf{M}_{\mathbf{f}} \otimes \mathbf{1}_K^\intercal))$, with $\otimes$ denotes the Kronecker product, and $\mathbf{M}_{\mathbf{f}} \otimes \mathbf{1}_K^\intercal$ refers replicate $\mathbf{M}_{\mathbf{f}}$ across rows.
    By the data processing inequality (DPI)~\cite{cover1999elements}, we have,
    \begin{equation}
    \label{eq dpilemma2}
    I(\mathbf{Y}; \mathbf{R} \circ \mathbf{M}) \geq I\left(\mathbf{Y}; \mathbf{R} \circ (\mathbf{M}_{\mathbf{f}} \otimes \mathbf{1}_K^\intercal)\right),
    \end{equation}
    since $\mathbf{R} \circ (\mathbf{M}_{\mathbf{f}} \otimes \mathbf{1}_K^\intercal)$ can be regarded as a deterministic function of $\mathbf{R} \circ \mathbf{M}$ (by constraining $\mathbf{M}$ to be column-wise).
    If we constrain $\mathbf{M} = \mathbf{M}_{\mathbf{f}} \otimes \mathbf{1}_K^\intercal$, where the task-relevant information in $\mathbf{R}$ is uniformly distributed across rows within each column, then:
    $
    I(\mathbf{Y}; \mathbf{R} \circ \mathbf{M}) = I\left(\mathbf{Y}; \mathbf{R} \circ (\mathbf{M}_{\mathbf{f}} \otimes \mathbf{1}_K^\intercal)\right).
    $
    
    The same goes for adapting $\mathbf{h}$ with $\mathbf{M}_{\mathbf{h}}$, with $ I(\mathbf{Y};\mathbf{h} \circ \mathbf{M}_\mathbf{h}, \mathbf{f}) = I(\mathbf{Y}; \mathbf{R} \circ \mathbf{M}_{\mathbf{h}})$, due to the uniform natural of text embeddings across all samples, $\mathbf{R} \circ \mathbf{M}_{\mathbf{h}}$ can be regarded as a deterministic function of $\mathbf{R} \circ \mathbf{M}$ (by constraining $\mathbf{M}$ to be sample-wise). We thus have,
    \begin{equation}
    \begin{aligned}
    \label{eq adapth}
        I(\mathbf{Y;\mathbf{R} \circ \mathbf{M}}) \geq  I(\mathbf{Y};\mathbf{h} \circ \mathbf{M}_\mathbf{h}, \mathbf{f}).
    \end{aligned}
    \end{equation}
    
    In all, equality in Eq.~\eqref{eq lemma2} holds for these constrained cases. Otherwise, we will have LHS larger than RHS in Eq.~\eqref{eq lemma2}. 
\end{proof}

\begin{lemma}
\label{lemma3}
    Adapting $\mathbf{R}$ achieves higher mutual information with $\mathbf{Y}$ than adapting $\mathbf{h}$ and $\mathbf{f}$ jointly. In particular,
    \begin{equation}
    \begin{aligned}
    \label{eq lemma3}
        I(\mathbf{Y;\mathbf{R} \circ \mathbf{M}}) \geq I(\mathbf{Y};\mathbf{h} \circ \mathbf{M}_\mathbf{h}, \mathbf{f} \circ \mathbf{M}_\mathbf{f}).
    \end{aligned}
    \end{equation}
\end{lemma}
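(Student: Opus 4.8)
The plan is to mirror the argument of Lemma~\ref{lemma2}, reducing the joint adaptation of both modalities to a single \emph{structured} mask acting on $\mathbf{R}$, and then invoking the data processing inequality to show that the unconstrained mask $\mathbf{M}$ dominates. First I would compute the rational matrix induced by the jointly adapted embeddings. Since $(\mathbf{f}\circ\mathbf{M}_\mathbf{f})_j = \mathbf{f}_j (\mathbf{M}_\mathbf{f})_j$ and $(\mathbf{h}\circ\mathbf{M}_\mathbf{h})_{i,j} = \mathbf{h}_{i,j}(\mathbf{M}_\mathbf{h})_{i,j}$, the entry-wise definition $\mathbf{R}_{i,j}=\mathbf{f}_j\mathbf{h}_{i,j}$ yields the adapted rational matrix
\begin{equation}
\tilde{\mathbf{R}}_{i,j} = \mathbf{f}_j (\mathbf{M}_\mathbf{f})_j\,\mathbf{h}_{i,j}(\mathbf{M}_\mathbf{h})_{i,j} = \mathbf{R}_{i,j}\,\big((\mathbf{M}_\mathbf{f}\otimes\mathbf{1}_K^\intercal)\circ\mathbf{M}_\mathbf{h}\big)_{i,j}.
\end{equation}
Writing $\mathbf{M}' := (\mathbf{M}_\mathbf{f}\otimes\mathbf{1}_K^\intercal)\circ\mathbf{M}_\mathbf{h}$, this shows the joint modality adaptation is exactly equivalent to adapting $\mathbf{R}$ by the factorized mask $\mathbf{M}'$, i.e. $\tilde{\mathbf{R}} = \mathbf{R}\circ\mathbf{M}'$.

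Next, since the CLIP prediction depends on the jointly adapted embeddings only through the column sums $\sum_j \tilde{\mathbf{R}}_{i,j}$, the sufficiency argument of Lemma~\ref{lemma1}, applied now to the adapted features, gives $I(\mathbf{Y};\mathbf{f}\circ\mathbf{M}_\mathbf{f}, \mathbf{h}\circ\mathbf{M}_\mathbf{h}) = I(\mathbf{Y};\mathbf{R}\circ\mathbf{M}')$. It then remains to compare $\mathbf{R}\circ\mathbf{M}$ against $\mathbf{R}\circ\mathbf{M}'$. Because $\mathbf{M}'$ is a particular element of the full mask space $\mathbb{R}^{K\times D}$ over which the unconstrained $\mathbf{M}$ ranges, $\mathbf{R}\circ\mathbf{M}'$ can be regarded as a deterministic function of $\mathbf{R}\circ\mathbf{M}$ (by constraining $\mathbf{M}$ to the factorized form), so the DPI yields
\begin{equation}
I(\mathbf{Y};\mathbf{R}\circ\mathbf{M}) \geq I(\mathbf{Y};\mathbf{R}\circ\mathbf{M}') = I(\mathbf{Y};\mathbf{h}\circ\mathbf{M}_\mathbf{h}, \mathbf{f}\circ\mathbf{M}_\mathbf{f}),
\end{equation}
which is precisely the claim in Eq.~\eqref{eq lemma3}.

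The main obstacle will be making Step~1's equality of mutual informations fully rigorous: one must argue that the joint feature adaptation affects $\mathbf{Y}$ \emph{only} through the product mask $\mathbf{M}'$ on $\mathbf{R}$ (the sufficiency/invariance step), and precisely justify the direction of the functional dependence required by the DPI, namely that the constrained factorized mask is recoverable as post-processing of the general adaptation. I would also flag where the inequality becomes strict: the family $\{\mathbf{M}' : \mathbf{M}'=(\mathbf{M}_\mathbf{f}\otimes\mathbf{1}_K^\intercal)\circ\mathbf{M}_\mathbf{h}\}$ is a strict subset of $\mathbb{R}^{K\times D}$, since the image component is forced constant across rows and couples only multiplicatively with the text component, so the unconstrained $\mathbf{M}$ can realize per-element calibrations that no product $\mathbf{M}'$ admits; equality holds exactly when the task-optimal mask happens to factorize in this way.
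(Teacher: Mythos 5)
Your proof is correct (at the same level of rigor as the paper's own arguments), but it takes a genuinely different route from the paper. The paper's proof is a two-step chain: first, by the DPI applied to the $\mathbf{h}$-coordinate alone, $I(\mathbf{Y};\mathbf{h},\mathbf{f}\circ\mathbf{M}_\mathbf{f}) \geq I(\mathbf{Y};\mathbf{h}\circ\mathbf{M}_\mathbf{h},\mathbf{f}\circ\mathbf{M}_\mathbf{f})$ (with equality when $\mathbf{M}_\mathbf{h}$ is invertible); second, it invokes Lemma~\ref{lemma2} as a black box, which already gives $I(\mathbf{Y};\mathbf{R}\circ\mathbf{M}) \geq I(\mathbf{Y};\mathbf{f}\circ\mathbf{M}_\mathbf{f},\mathbf{h})$, and chains the two inequalities. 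You instead bypass the statement of Lemma~\ref{lemma2} and re-run its proof technique: you show the jointly adapted pair induces exactly the rational matrix $\mathbf{R}\circ\mathbf{M}'$ with the factorized mask $\mathbf{M}' = (\mathbf{M}_\mathbf{f}\otimes\mathbf{1}_K^\intercal)\circ\mathbf{M}_\mathbf{h}$, identify the two mutual informations by the sufficiency argument of Lemma~\ref{lemma1}, and then compare the unconstrained mask family against this constrained subfamily. What each buys: the paper's chaining is shorter given Lemma~\ref{lemma2}, but it passes through the intermediate quantity $I(\mathbf{Y};\mathbf{h},\mathbf{f}\circ\mathbf{M}_\mathbf{f})$ and ties equality to invertibility of $\mathbf{M}_\mathbf{h}$; your reduction is more self-contained and structurally informative, since it exhibits joint modality adaptation as precisely the factorized-mask subset of rational adaptations and gives the sharper equality characterization (equality iff the task-optimal mask factorizes). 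Both arguments share the same rigor caveat, which you correctly flag: the ``DPI'' comparison of $\mathbf{R}\circ\mathbf{M}$ with $\mathbf{R}\circ\mathbf{M}'$ is really a subset/supremum argument over mask families rather than a literal data-processing step for fixed masks, since recovering $\mathbf{R}\circ\mathbf{M}'$ from $\mathbf{R}\circ\mathbf{M}$ would require entry-wise invertibility of $\mathbf{M}$; the paper glosses over this in the same way in its proof of Lemma~\ref{lemma2}.
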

\begin{proof}
    By the DPI, we have,
    \begin{equation}
        I(\mathbf{Y};\mathbf{h}, \mathbf{f} \circ \mathbf{M}_\mathbf{f}) \geq I(\mathbf{Y};\mathbf{h} \circ \mathbf{M}_\mathbf{h}, \mathbf{f} \circ \mathbf{M}_\mathbf{f}),
    \end{equation}
    where the equality holds when $\mathbf{M}_{\mathbf{h}}$ is invertible.
    Combining with Lemma~\ref{lemma2}, we thus can complete the proof. 
\end{proof}

Lemma~\ref{lemma2} and~\ref{lemma3} demonstrate that adapting the rational matrix results in mutual information no less than adapting $\mathbf{f}$, $\mathbf{h}$, or both $\mathbf{f}$ and $\mathbf{h}$. 
According to the information bottleneck principle~\cite{tishby2000information}, a higher mutual information between the label and an intermediate representation generally correlates with better predictive performance. Given the compression performance (\ie generalizibility) of the model can be largely preserved via an all-one regularization for $\mathbf{M}$ in our implementation, it is not surprise that leveraging the fused representation can be more beneficial than utilizing the individual modalities in isolation. These observations align with our empirical observations in the ablation studies, and they further justify our motivation for adapting the rational matrix to achieve effective fine-tuning.

\begin{figure}[t]
    \centering
    \includegraphics[width=0.97\linewidth]{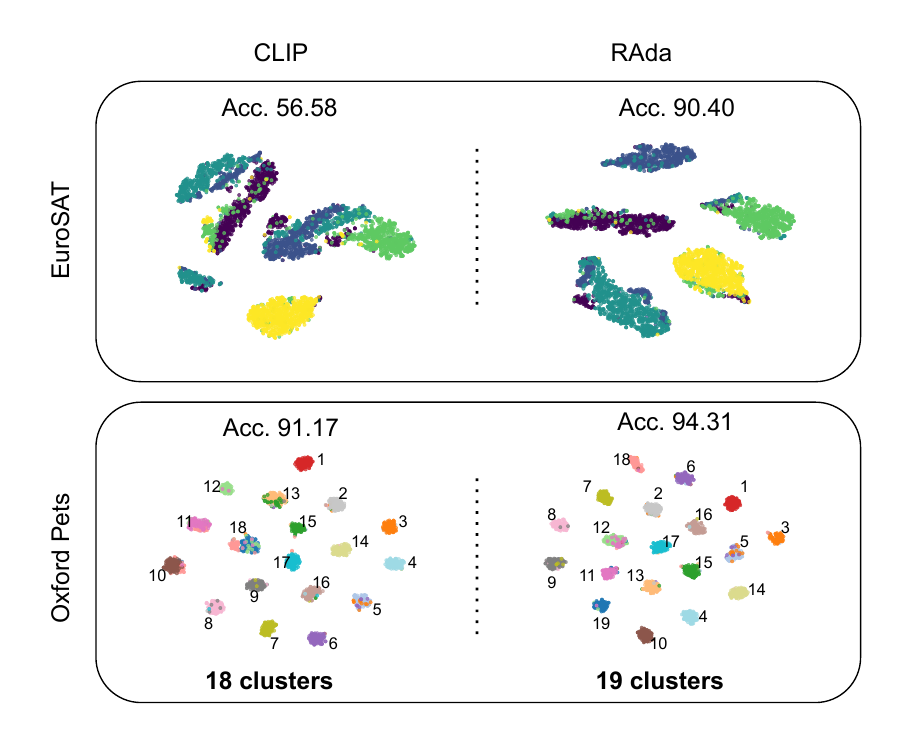}
    \vspace{-0.3cm}
    \caption{T-SNE~\cite{van2008visualizing} plots of the Rational Matrix from CLIP and RAda in the EuroSat (5 classes) and OXfordPets (19 classes) datasets. The adapted rational matrix in RAda shows clearer and more precise separation than that in the original CLIP.}
    \label{fig:t-SNE}
    \vspace{-0.2cm}
\end{figure}

\section{Visualization}
\label{sec visual}
We present 2D t-SNE plots \cite{van2008visualizing} of $\mathbf{M} \circ \mathbf{R}$ (corresponding to RAda) and $\mathbf{R}$ (representing the original CLIP) to illustrate the behavior of the learned mask $\mathbf{M}$. As shown in Figure \ref{fig:t-SNE}, the adapted $\mathbf{R}$ in the EuroSAT dataset exhibits enhanced differentiability, with tighter clusters indicating improved class separability compared to that of CLIP.
Additionally, the plot of $\mathbf{M} \circ \mathbf{R}$ for OxfordPets reveals $19$ clusters, matching the total class count in the dataset, while the original $\mathbf{R}$ from CLIP shows only $18$ classes. This distinction aligns with the enhanced classification performance achieved through $\mathbf{M}$. These findings validate the effectiveness of adapting the decision-making process in achieving improved predictions within a VLM.

We also present the distribution of the values in the mask $\mathbf{M}$ for the two datasets. The plots in Figure~\ref{fig:plots} show that, for both datasets, the mask values exhibit a mean value of approximately \(1.0\), with the majority of the weights centered around \(1\) to form a normal distribution, and the mask value can reach as high as \(3\), indicating the varying contributions of different rational elements after adaptation. We hope this finding can inspire future research to develop more effective learning objective for adapting the rational matrix. 
We present an example of the adaptation process to better illustrate how the original decision matrix got shifted by the process. We show heatmaps of the mask $\textbf{M}$, the rational matries $\textbf{R}$, and $\textbf{M}\circ \textbf{R}$ in Figure~\ref{fig:heatmap} (left to right), where $\textbf{M}\circ \textbf{R}$ shows more evident classification clue than $\textbf{R}$ with larger values in the 2nd last column, suggesting the rational adaptation process helps the model to capture more confident outputs.

\begin{figure}[t]
    \centering
    \includegraphics[width=0.97\linewidth]{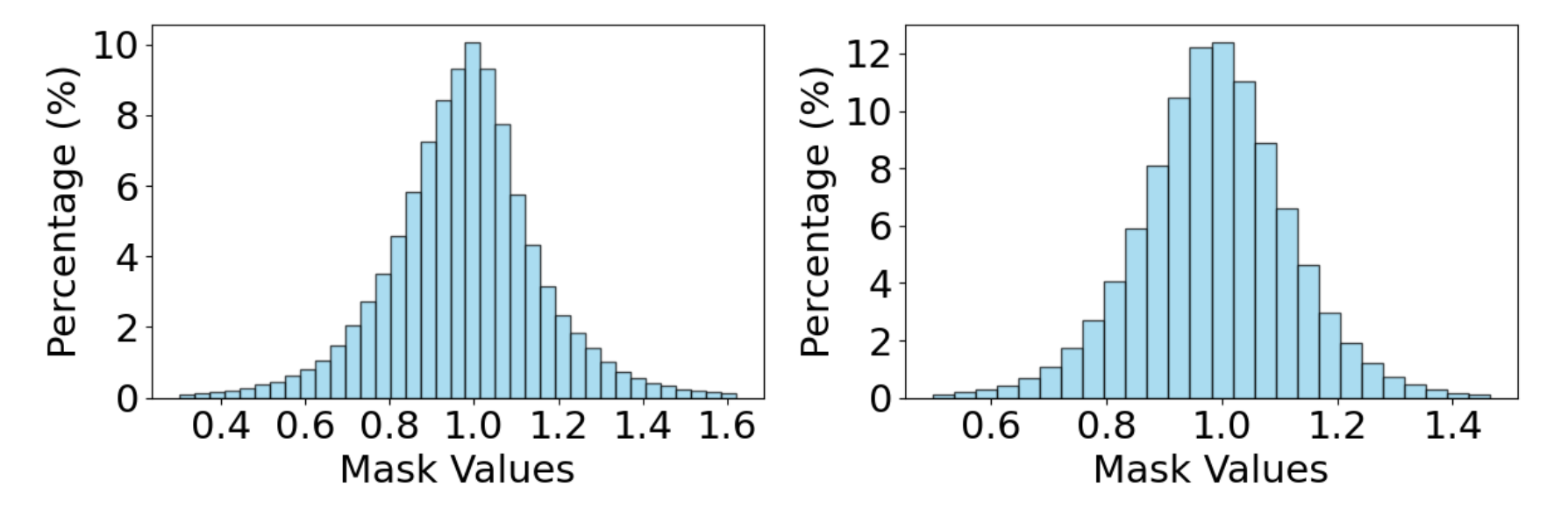}
    \vspace{-0.4cm}
    \caption{Distributions of values in the learned mask $\mathbf{M}$ for EuroSAT (\ie left figure) and Oxford Pets~(\ie right figure) datasets.}
    \label{fig:plots}
    \vspace{-0.2cm}
\end{figure}

\begin{figure}[t]
    \centering
    \includegraphics[width=0.95\linewidth]{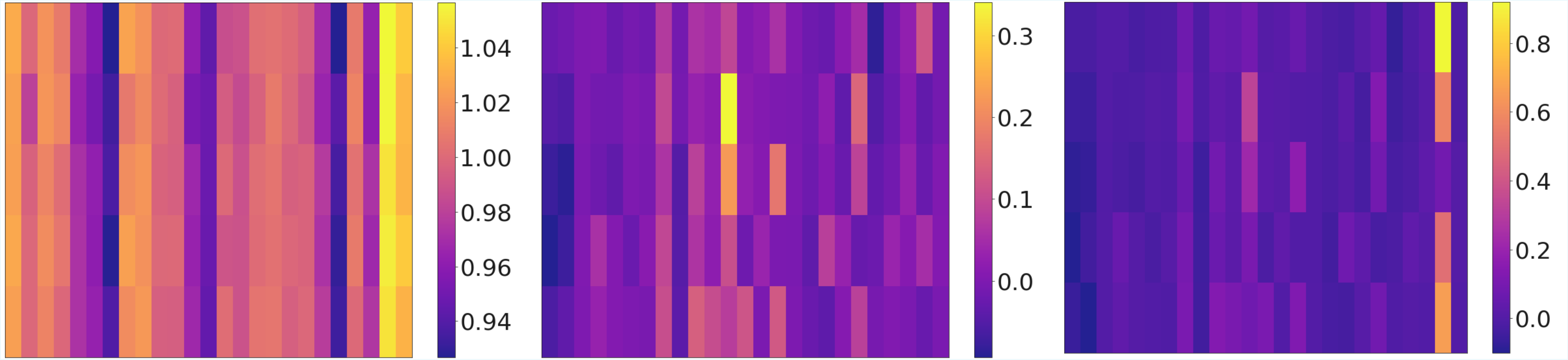}\\
    \centering{(a) $\textbf{M}$~~~~~~~~~~~~~~~~~~~~~~(b) $\textbf{R}$~~~~~~~~~~~~~~~~(c) $\textbf{M} \circ \textbf{R}$}
    \vspace{-0.2cm}
    \caption{Heatmaps of the rational adaptation process.}
    \label{fig:heatmap}
    \vspace{-0.5cm}
\end{figure}

\section{Detailed Settings in FFT}
\label{sec fftsetting}
Our implementation in the FFT setting consists of two consecutive parts, first updating the rational adapter (\ie RAda) and then updating all learnable parameters (\ie RAda-FT). This section provides more details regarding the objectives and hyper-parameter settings for the two parts.

First, for RAda, the objective is,
\begin{equation}
\small
    \label{eq radaobj}
    \begin{aligned}
    \arg \min_{\theta_m} \Vert \mathbf{M} - \textbf{1}_{K\times D} \Vert^2 - \log \frac{\exp(<\textbf{1}_D, (\mathbf{M}\circ\mathbf{R'})_{\ast}>)}{\sum_{i=1}^K \exp(<\textbf{1}_D, (\mathbf{M}\circ\mathbf{R'})_{i}>)}, \\
    \text{s.t.}~~\mathbf{R'}^\top = 
        \begin{bmatrix}
        \overline{\mathbf{f}}_1 {\mathbf{W}}_{\{1, 1\}}  &\overline{\mathbf{f}}_1 {\mathbf{W}}_{\{1, 2\}}  &$\dots$ &\overline{\mathbf{f}}_1 {\mathbf{W}}_{\{1, K\}} \\
        \overline{\mathbf{f}}_2 {\mathbf{W}}_{\{2, 1\}}  &\overline{\mathbf{f}}_2 {\mathbf{W}}_{\{2, 2\}} &$\dots$ &\overline{\mathbf{f}}_2 {\mathbf{W}}_{\{2, K\}} \\
        \vdots &\vdots & \ddots &\vdots \\
        \overline{\mathbf{f}}_D {\mathbf{W}}_{\{D, 1\}}  &\overline{\mathbf{f}}_D {\mathbf{W}}_{\{D, 2\}}  &$\dots$ & \overline{\mathbf{f}}_D {\mathbf{W}}_{\{D, K\}} \\
        \end{bmatrix},
    \end{aligned}
\end{equation}
where the first term is the smooth regularization for the mask, and the second term is the main classification loss. Since the text encoder is replaced with a linear classifier, we use the weight of the classifier (\ie $\mathbf{W}\in\mathbb{R}^{K\times D}$, which is initialized by the text feature $\overline{\mathbf{h}}$) to compute the corresponding rational matrix $\mathbf{R'}$.
We train it for 10 epochs with the learning rate of 0.004 and batch size of 512. Default settings from~\cite{wortsman2022robust} are adopted, where the AdamW optimizer~\cite{loshchilov2017decoupled} is utilized; weight decay is set to be 0.1; the same warmup learning strategy is also utilized. 

Second, for RAda-FT, the objective is,
\begin{equation}
\small
    \label{eq radaftobj}
    \begin{aligned}
    \arg \min_{\{\theta'_m, \theta_v, \mathbf{W}\}} - \log \frac{\exp(<\textbf{1}_D, (\mathbf{M}\circ\mathbf{R'})_{\ast}>)}{\sum_{i=1}^K \exp(<\textbf{1}_D, (\mathbf{M}\circ\mathbf{R'})_{i}>)},
        \end{aligned}
\end{equation}
where $\theta'_m$ is the rational adapter trained after 5 epochs with the objective in~Eq.~\eqref{eq radaobj}. We train RAda-FT with Eq.~\eqref{eq radaftobj} for 5 epochs using the same settings as RAda, except for the learning rate, which is set as 0.000004. 

\begin{table}[t]
\caption{Extending CLIP-Adapter~\cite{gao2024clip} in the FFT setting. Results with $\dag$ are reevaluated in our device, others are from FLYP~\cite{goyal2023finetune}.}
\vspace{-0.3 cm}
\centering
\scalebox{0.78}{
\begin{tabular}{@{}cccccccc@{}}
\toprule
  & \multicolumn{7}{c}{Training in Imagenet}  \\ 
  \cmidrule(l){2-8} 
\multirow{-2}{*}{Methods} & ID   & Im-V2  & Im-R &Im-A &Im-S &Object  & OOD Avg.  \\
\midrule 
LP & \multicolumn{1}{c|}{79.9} &69.8 &70.8 &46.4 &46.9 &52.1 &{57.2} \\ 
FT & \multicolumn{1}{c|}{81.3} &70.9 &65.6 &36.7 &46.3 &49.6 &{53.8}\\ 
L2-SP & \multicolumn{1}{c|}{81.7} &71.8 &70.0 &42.5 &48.5 &56.2 &57.8\\
LP-FT & \multicolumn{1}{c|}{81.7} &72.1 &73.5 &47.6 &50.3 &58.2 &60.3\\
FLYP & \multicolumn{1}{c|}{82.6} &73.0 &71.4 &48.1 &49.6 &58.7 &60.2\\
\hline
CLIP$\dag$ & \multicolumn{1}{c|}{68.3} &61.9 &\textbf{77.7} &49.9 &48.2 &54.2 & {58.4} \\
FLYP$^\dag$ & \multicolumn{1}{c|}{\textbf{82.6}} &\textbf{72.6} &71.8 &48.5 &49.8 &54.6 &59.5\\
Adapter$^\dag$ & \multicolumn{1}{c|}{{81.5}} &{71.7} &74.3 &50.3 &50.1 &55.3 &60.3\\
\rowcolor{tabhighlight}
\rowcolor{tabhighlight}
RAda-FT & \multicolumn{1}{c|}{81.4} &71.9 &75.5 &\textbf{51.7} &\textbf{50.4} &\textbf{56.8} & \textbf{61.3} \\
\bottomrule
\end{tabular}
}
\label{tab adapterfft}
\vspace{-0.5cm}
\end{table}

\section{Extending Other Fine-Tuning Ideas in FFT}
\label{sec morefftexp}
Our experiments demonstrate that RAda can be seamlessly integrated into the FFT setting by building on the existing practice~\cite{kumar2022fine}, effectively enhancing the baseline. This exploration has received limited attention in other alike fine-tuning approaches. To provide a comprehensive evaluation of our method, this section investigates extending the same concept to other fine-tuning approaches within the FFT framework.
Note that not all fine-tuning approaches are suitable for the FFT setting. For instance, CoOp~\cite{zhou2022learning} relies on the presence of a text encoder, which will be replaced by a linear classifier in FFT. Meanwhile, given additional inserted prompts will require large computational resources when updating all parameters, this section will focus exclusively on experiments extending an adapter-based fine-tuning approach that operates within the FFT framework without requiring a text encoder.
Specifically, we extend CLIP-Adapter~\cite{gao2024clip} by applying the same training strategy as LP-FT~\cite{kumar2022fine}. In this extension, we first train the feature adapter and then use the weights obtained after 5 epochs as initialization to fine-tune all learnable parameters. To ensure fair comparisons, we adopt the same settings as RAda for CLIP-Adapter, except for the learning rate, which is adjusted by factors of ${\times 0.1, \times 1, \times 10}$ relative to the original values in our implementation. The learning rate yielding the best performance on the evaluation sets in the ID data is selected for reporting results.

We list the experimental results in Table~\ref{tab adapterfft}. We observe that CLIP-Adapter can improve the OOD performance for FT when with the same sequential updating strategy, and it performs better than LP-FT in 3 out of the 5 OOD datasets evaluated. This is mainly because CLIP-Adapter can preserve the text features and part of the visual information, which is helpful for generalization~\cite{kumar2022fine}, as opposed to LP-FT, where the text information will be compromised for adaptation in the training data. 
However, as CLIP-Adapter cannot leverage fused representations from multiple modalities and inevitably distorts certain pretrained visual cues, it is inferior to RAda across all OOD datasets, with an average performance gap of 1pp. These results validate the effectiveness of our design of focusing on the final decision-making process during the fine-tuning process.

\section{Weight and Loss Format Ablation for Mask Regularization}
\label{sec weight}
%
To assess the sensitivity of RAda to variations in the regularization term $\mathcal{L}_{reg}$, we analyze its performance across different values of the hyperparameter $\alpha$ in the regularization loss, formulated as $\mathcal{L}'_{main} + \alpha * \mathcal{L}_{reg}$. 
The results, shown in Figure~\ref{fig alpha}, indicate that RAda remains robust to different values of $\alpha$ as long as it is within a reasonable range (\ie from 0.5 to 2.5), indicating that we do not need to specifically tune this hyper-parameter (the default $\alpha=1$ suffices). In practice, for all experiments in the EFT setting, it is fixed as 1.5 as it yields relatively better results. 

Moreover, since there are different alternatives for the loss terms, we also conduct experiments to analyze if the adopted $L_2$ regularization is the optimal choice by compring it with $L_1$ and $L_{\infty}$. Results in Table~\ref{tab reg} indicate that using the adopted L2 norm leads to better results than other alternatives.

\begin{figure}
    \centering
    \includegraphics[width=0.9\linewidth]{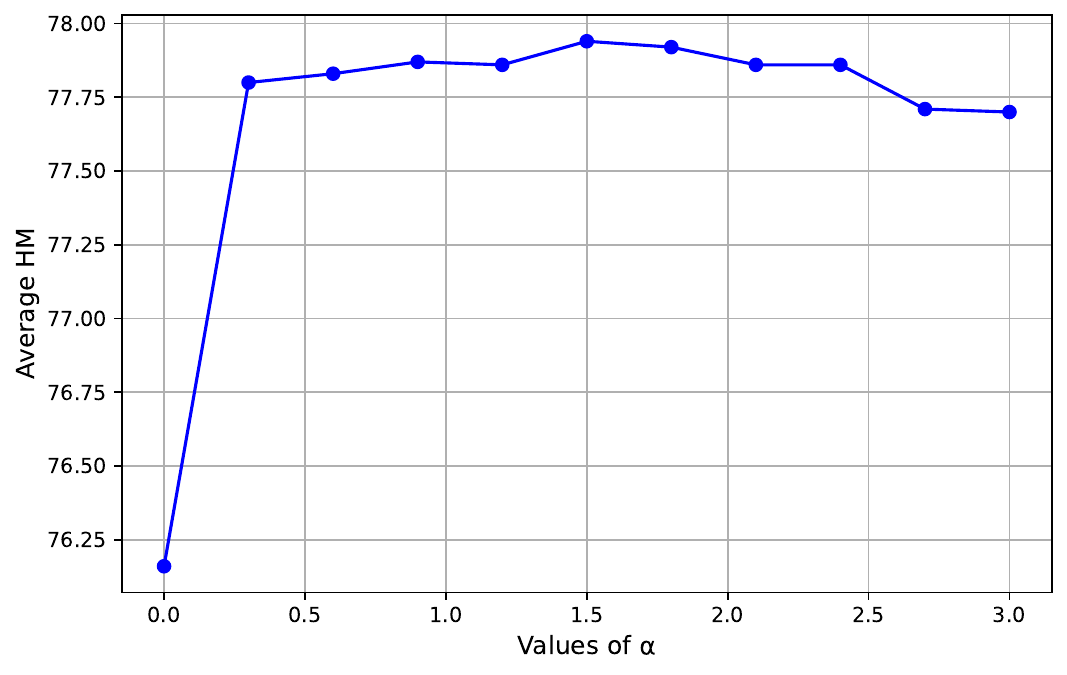}
    \vspace{-0.3cm}
    \caption{Evaluations of RAda with different weights (\ie $\alpha$) for the mask regularization term. Results are averaged over 11 datasets. RAda is insensitive regarding the different values of $\alpha$ as long as it is within a reasonable range (\ie from 0.5 to 2.5).}
    \label{fig alpha}
\end{figure}

\begin{table}[t]
    \centering
    \caption{Ablation on loss format for the mask regularization term. Results are averaged over 11 datasets.}
    \vspace{-0.3 cm}
    \begin{tabular}{lccc}
      \hline
      \textbf{loss format} &Base &Novel & \textbf{HM} \\
      \hline
      $L_{\infty}$ & 80.06 & 73.87 & 76.84 \\
      $L_1$        & 81.50 & 73.3 & 77.21 \\
      \rowcolor{tabhighlight}
      $L_2$        &82.16 &74.14 & 77.94 \\
      \hline
    \end{tabular}   
    \label{tab reg}
    \vspace{-0.5cm}
\end{table}

\section{Experiments with Different Backbones}
\label{sec backbones}
Our default setting in the manuscript employs the ViT-B/16 backbone for the CLIP image encoder. In this section, we investigate whether RAda can maintain its effectiveness with alternative backbones. Specifically, we evaluate RAda with another transformer-based image encoder (\ie ViT-B/32~\cite{dosovitskiy2020image}) and two ResNet-based variants of CLIP (\ie ResNet-50 and ResNet-101~\cite{he2016deep}). As presented in Table~\ref{tab backbone}, RAda demonstrates consistent improvements over the baseline across different backbones, achieving significant enhancements in base accuracy while exhibiting slight reductions in novel class performance.
In comparison with the established art CoOp~\cite{zhou2022learning}, RAda consistently outperforms it across all backbones in terms of harmonic mean between the base and novel accuracies, with particularly notable advantages in the unseen novel classes. These findings affirm the robustness and adaptability of RAda across varying backbone architectures.
\begin{table}[t]
    \centering
    \caption{Evaluations of RAda with different backbones for the CLIP image encoder. Results are averaged over 11 datasets.}
    \vspace{-0.3 cm}
    \scalebox{0.9}{
    \begin{tabular}{cccc}
    \toprule
         Backbone & Base Acc. &Novel Acc. & Hamonic Mean\\
         \midrule
         \multicolumn{4}{c}{ResNet-50} \\
         \hline
          CLIP & 65.29 & 69.01 & 67.09\\
          CoOp &76.56 &63.31 &69.31\\
          \rowcolor{tabhighlight}
            RAda &76.80 &67.30 &71.74 \\ 
        \midrule
         \multicolumn{4}{c}{ResNet-101} \\
         \hline
         CLIP & 64.53 & 69.82 & 67.07 \\
         CoOp &78.31 &63.80 &70.31\\
         \rowcolor{tabhighlight}
        RAda &78.42 &68.27 &72.99 \\     
         \midrule
          \multicolumn{4}{c}{ViT-B/32} \\
         \hline
          CLIP & 67.21 & 71.65 & 69.36 \\
          CoOp & 78.55 & 66.08 & 71.78 \\
          \rowcolor{tabhighlight}
        RAda & 76.31  & 70.45 & 73.26 \\
        \midrule
         \multicolumn{4}{c}{ViT-B/16} \\
         \hline
        CLIP & 69.34 & 74.22 & 71.70 \\
        CoOp & 82.69 &63.22 &71.66  \\
        \rowcolor{tabhighlight}
        RAda & 82.16 & 74.14 & 77.94 \\
         \bottomrule
    \end{tabular}}
    \label{tab backbone}
    \vspace{-0.5cm}
\end{table}

\section{Experiments with Different VLMs}
\label{sec vlms}
Following existing arts~\cite{zhou2022conditional,gao2024clip}, we conduct experiments only with CLIP in our manuscript.
But note that the rational matrix~\cite{chen2023domain} is applicable not only in CLIP’s similarity-based structure, but any cases when there is contrastive (CT) or softmax losses (SM) , as it represents the inner product's intermediate state when computing these losses. 
This extends RAda also in other VLMs, such as ALIGN or SigLIP where CL and SM are involved. 
To validate the effectiveness of RAda also in these different VLMs, we conduct experiments in the EFT setting and present the results in Table~\ref{tab vlm}. As seen, RAda is consistently effective even with different VLMs, indicating the broader applicability of RAda. 
\begin{table}[t]
    \centering
    \caption{Applying RAda in different VLMs. Results are averaged over 11 datasets.}
    \vspace{-0.3 cm}
    \begin{tabular}{lccc}
      \hline
      \textbf{VLM} & Base Acc.  & Novel Acc. & {Harmonic Mean} \\
      \hline
      \multicolumn{4}{c}{CLIP~\cite{radford2021learning}} \\
      \hline
      Zero Shot & 69.34 & 74.22 & 71.70 \\
      \rowcolor{tabhighlight}
      RAda      & 78.42 & 68.27 & 72.99 \\
      \hline
      \multicolumn{4}{c}{OpenCLIP~\cite{ilharco_gabriel_2021_5143773}} \\
      \hline
      Zero Shot & 67.61 & 71.08 & 69.30 \\
      \rowcolor{tabhighlight}
      RAda      & 78.00 & 73.31 & 75.58 \\
      \hline
      \multicolumn{4}{c}{SigLIP~\cite{zhai2023sigmoid}} \\
      \hline
      Zero Shot & 78.28 & 73.75 & 75.95 \\
      \rowcolor{tabhighlight}
      RAda      & 84.13 & 74.98 & 79.29 \\
      \hline
      \multicolumn{4}{c}{ALIGN~\cite{jia2021scaling}} \\
      \hline
      Zero Shot & 70.00 & 66.66 & 68.29 \\
      \rowcolor{tabhighlight}
      RAda      & 75.60 & 69.79 & 72.58 \\
      \bottomrule
    \end{tabular}   
    \label{tab vlm}
    \vspace{-0.2cm}
\end{table}

\section{More Attention Layers at the End}
\label{sec moreattn}
RAda attaches one additional attention layer at the end of CLIP for adaptation. To evaluate if more layers can better help the performance, we compare the original implementation with variants that use different attention layers. For the consecutive layers, we use the combined mask output from all previous layers as the query, and the rational matrix is still served as key and values for the new layers. The residual connection is utilized for the multi-layer implementation, where the final mask is the combination result of the masks obtained from all previous and current steps:
\begin{equation}
    \label{eq morelayers}
    \begin{aligned}
    \mathbf{M}^n = \mathbf{M}_0 + \mathbf{M}_1 +...+\mathbf{M}_n, \\
    \text{s.t.}~~\mathbf{M}_n = \mathcal{F}_m^n(\{{\mathbf{M}^{n-1}}, \mathbf{R}\}, \theta_m^n),
    \end{aligned}
\end{equation}
where $\mathbf{M}^n$ denotes using $n$ layers to obtain the final mask, $\mathbf{M}_n$ is the mask from the $n$-th layer, $\theta_m^n$ is the parameter for the $n$-th attention layer, and $\{{\mathbf{M}^{n-1}}, \mathbf{R}\}$ denotes the query is from $\mathbf{M}^{n-1}$, key and value are from $\mathbf{R}$ in the attention layer. Note $\mathbf{M}_0$ is implemented with $\mathbf{M}_0 = \mathcal{F}_m^0(\{\overline{\mathbf{h}}, \overline{\mathbf{f}}, \mathbf{R}\}, \theta_m^0)$ (\ie Eq. (6) in the manuscript) given there is no previous mask information.
Similarly, we report average results across 11 datasets from the base-to-new experiments of the EFT setting. As shown in Table~\ref{tab layers}, using more attention layers can improve the base performance but decrease the zero-shot ability of CLIP, indicating a trade-off between adapting to seen categories and maintaining the model's capacity to generalize to unseen categories. 
Since using only one layer for RAda can obtain similar results as that with two layers, we thus attach only one attention layer for simplicity in our implementation.

\begin{table}[t]
    \centering
    \caption{Evaluations of RAda with different attention layers attached at the end. Results are averaged over 11 datasets.}
    \vspace{-0.3 cm}
    \scalebox{0.9}{
    \begin{tabular}{lccc}
    \toprule
         & Base Acc. &Novel Acc. & Hamonic Mean\\
         \midrule
         Baseline & 69.34 &74.22 &71.70\\
         \hline
         \rowcolor{tabhighlight}
         1 layer &82.16 &74.14 &77.94\\
          2 layer &82.51 &73.84 &77.93\\
          3 layer &82.33 &71.81 &76.71\\
         \bottomrule
    \end{tabular}}
    \label{tab layers}
    \vspace{-0.3cm}
\end{table}

%

\end{document}